\newtheorem{definition}{\textbf{Definition}}
\newtheorem{lemma}{\textbf{Lemma}}
\newtheorem{theorem}{\textbf{Theorem}}
\newtheorem{assumption}{\textbf{Assumption}}
\newenvironment{proof}{\paragraph{Proof:}}{\hfill$\square$}
\newif\ifanon
\author{
Anonymous Author(s)\\
Affiliation\\
Address\\
\texttt{email}
}
\author{
  Jiaxiao Zheng, Gustavo de Veciana\\
  Department of Electrical and Computer Engineering\\
  The University of Texas at Austin\\
  Austin, TX 78712 \\
  \texttt{gustavo@ece.utexas.edu} \\
}
\begin{document}
\title{Modeling and Optimization of Human-Machine Interaction Processes via the Maximum Entropy Principle}

\maketitle

\newif\ifappendix
\appendixtrue



\begin{abstract}
We propose a data-driven framework to enable the modeling
and optimization of human-machine interaction processes,
e.g., systems aimed at assisting humans in decision-making
or learning, work-load allocation, and interactive advertising.
This is a challenging problem for several reasons.
First, humans' behavior is hard to model or
infer, as it may reflect biases, long term memory, and
sensitivity to sequencing, i.e., transience and exponential complexity
in the length of the interaction.
Second, due to the interactive nature of such processes,
the machine policy used to engage with human
may bias possible data-driven inferences.
Finally, in choosing machine policies that optimize interaction rewards,
one must, on the one hand, avoid being
overly sensitive to error/variability in the estimated
human model, and on the other, being overly
deterministic/predictable which may result in poor
human `engagement' in the interaction.
To meet these challenges, we propose a robust
approach, based on the maximum entropy principle, which
iteratively estimates human behavior and optimizes the machine
policy--Alternating Entropy-Reward Ascent (AREA) algorithm.
We characterize AREA, in terms of its space and time complexity and
convergence.  We also provide an initial validation based on synthetic
data generated by an established noisy nonlinear model for
human decision-making.
\end{abstract}

\section{Introduction}

Computing and information systems are
increasingly prevalent in our daily lives, which forms a variety of human-in-the-loop systems.
Many such systems are interactive in the sense that, humans and machines
take decisions/actions in response to each other, forming a sequence
driven by unknown dynamics associated with human behavior.
For instance, one can view web searches as an interactive process, where
humans' search history, attention, and eventual decisions reflect
an interaction with the machine's sequencing, placement and timing of advertisements.
The industry refers to such interactive processes as `convergence paths' and is increasingly
interested in optimizing their outcomes \cite{HCE16}.
Such problem involving interactions are usually studied under
the context of Markov decision processes (MDP) and 
its variants, see, e.g.,
\cite{Put94,Ber00,BaR11}. However, the actual problem associated with interactive
processes presents several challenges which remain unsolved, including the following.


\emph{Complexity of inferring interactive human behavior.}
In this paper we will focus on structured human-machine interactions where one has
modeled both human and machine behaviour/choices over time, and the setting
arises repeatedly either by the same person or by a large population.
The outcomes of such interactions can depend subtly on the history thus one
can expect exponential complexity to be a challenge--unless
the underlying processes have a `nice' structure.
Such assumption is essential for widely studied problems including
MDP \cite{Put94,Ber00,BaR11}, reinforcement learning \cite{KLM96, SB98},
and multi-armed bandit problem \cite{BuC12}, where human decision-making processes
are assumed to be independent across time, or have one-step
Markov property. However, those assumptions are questionable according to
studies on human cognition, see, e.g., \cite{MuB62}.

One recent work considering long-term dependency is deep Q-learning \cite{VKD15},
where authors used a complex neural network to capture
the potential value of a state-action pair,
where the state may incorporate complex historical information.
However, because most interactive processes are transient, as both human and machine
accumulate a history of decisions over time,
one might expect the data requirements of carrying out deep Q-learning is quite high.


\emph{Biases in collecting data in interaction processes.}
Inferring a model for human behavior in the context of human-machine
interaction process is also challenging
because to collect data one must choose a machine policy to `interact'
with humans. This may in turn lead to `biased'
inferences of human behavior. In particular, a machine policy that focuses on
`rewarding' actions may preclude exploration/observation of other
interaction modalities.
Similar consideration was explored in partially observable MDP \cite{MeR05}
to improve the efficiency of the solution, and in multi-armed bandit problem
\cite{BuC12, AHK14} to achieve better exploration-exploitation tradeoff.
However, data-driven models and inferences should respect the
\emph{causal} nature of human-machine interactions, but how to
model/promote the randomness of a \emph{causal} model remains unknown.

\emph{Robustness and exploration in optimizing machine interactions.}
A general data-driven framework for modeling and optimizing human-machine interaction
processes might be viewed as involving two concerns. On one hand, engaging humans in interaction to
collect data to infer models of human behavior, and on the other,
using models of human behavior to choose machine policies optimizing interaction `rewards,'
i.e., the effectiveness of the sequence of machine actions
in nudging human towards desirable outcomes. To that end, it is desirable to
choose machine policies which are not overly sensitive to sampling
noise in data collection and/or variability in human behavior.
Also, of interest are policies that are not overly deterministic/predictable, as
in some settings, such policies may be poor in keeping humans `engaged',
see, e.g., \cite{EpN15}, and poor
in eliciting rich human-machine data sets.


\emph{Contributions}.
In this paper, we propose a data-driven framework to jointly solve the estimation and
optimization problems associated with human-machine interaction processes.
We adopt an inference technique
based on a constrained maximum entropy principle for interactive processes, see \cite{ZBD11, ZBD13}.
This allows one to incorporate prior knowledge
of the (possibly) relevant features of human behavior,
via moment constraints associated with interaction functions.
We consider optimizing machine policies based on an
interaction  reward function with an entropy-based regularization term.
This aims to find machine policies which
maximize rewards, are robust to estimation noise, and
maintain a degree of exploration when interacting with humans.
Our proposed Alternating Reward-Entropy Ascent (AREA) Algorithm,
alternates between data-collecting, estimation of human behavior, and the
optimization of machine policy, with a view on reaching a consistent fixed point.
We provide a characterization of various properties
of AREA.  In particular, for decomposable and/or path-based feature and
reward functions, we devise a computationally efficient
approach to estimation and optimization steps. The approach
takes advantage of defining a stopping time over the interaction and
the conditional Markov property of the estimated human model,
to significantly reduce space and time complexity.
We provide a theoretical characterization of the AREA algorithm
in terms of its convergence, along with simple preliminary
evaluation results based on synthetic data obtained
from a noisy nonlinear model for human decision-making.
\ifappendix
All proofs are included in the appendix.\fi


\section{Problem Formation}\label{sec:formation}

We shall consider a structured
discrete time human-machine \emph{interaction process} over a period of time $1,2,\dots, T$,
which can be
viewed as a pair of sequences of random variables, $(H_1, \dots, H_T)$ corresponding to human
actions/responses if any,  and $(M_1, \dots, M_T)$ denoting those of the machine.
We shall assume the random
variables $H_t, M_t$ capture discrete human and machine actions at time $t$, and,
without loss of generality, that for all $t$, $H_t\in \mathcal{H}$, where $\mathcal{H}$ denotes the human's
action space, and $M_t\in \mathcal{M}$, where $\mathcal{M}$ corresponds to
the machine's action space\footnote{Note both the
human and/or machine could choose to do nothing in their turn. This can be included in our model
by including null action in both $\mathcal{M}$ and $\mathcal{H}$.}.
Throughout this paper we assume that $|\mathcal{H}|$ and $|\mathcal{M}|$
are finite.
To simplify notation, we let $H^t = (H_1,\ldots, H_t)$ for $t=1,\ldots, T$,
and similarly define $M^t$. When $t = 0$, $H^t$ or $M^t$ contains no elements. 
We assume that the human and the machine take turns,
such that the machine's action at time $t+1$, i.e., $M_{t+1}$ depends on
$H^t,M^t$ while that of the human at time $t+1$, i.e.,
$H_{t+1}$ depends on $H^{t},M^{t+1}$.
The joint distribution of $(H^T, M^T)$ captures the
interplay between the human and machine. Depending on the
setting, the human refers to a particular individual or
a population, where the behavior
can be captured via a stable distributional model.

We shall assume that when a human and machine interact, the
machine's policy is known and captured by a collection
of conditional distributions $Q$, for succinctness denoted by
$
Q(m_t|h^{t-1},m^{t-1}) :=  p_{M_{t}|H^{t-1},M^{t - 1}}(m_{t}|h^{t-1},m^{t-1}) ~~\mbox{for}~~ t = 1,\ldots T.
$
Similarly human behavior is denoted by conditional distributions $P$ given by
$
P(h_t|h^{t-1},m^{t}) :=  p_{H_{t}|H^{t-1},M^{t}}(h_{t}|h^{t-1},m^{t}) ~~\mbox{for}~~ t = 1,\ldots T.
$
It is easy to show that joint distribution of $(H^T, M^T)$,
denoted by $PQ$, resulting from a human model $P$ interacting with
a machine policy $Q$, can be decomposed as
$
PQ(h^T,m^T) = P(h^T || m^T) Q(m^T || h^{T}),
$
where
\begin{equation}\label{eq:human-factor}
P(h^T || m^T) := \prod_{t=1}^{T} P(h_t|h^{t-1},m^t)
~~~~\mbox{and}~~~~
Q(m^T || h^T) := \prod_{t=1}^{T} Q(m_t|h^{t-1},m^{t-1}),
\end{equation}
correspond to the \emph{causally conditioned distributions} of the human and the machine,
i.e., products of sequentially conditioned distributions.
We will assume that data of human-machine interactions can be collected by fixing a machine policy,
and keeping track of the realizations of such interactions.

We let $P^*(h^T\| m^T)$ denote the {\em true} human behavior
and $\hat{P}(h^T\| m^T)$ an estimated model thereof.
We let $PQ(A)$ denote the probability of an
event $A$ measurable w.r.t. $(H^T, M^T)$ and
we let $E_{PQ}[f(H^T, M^T)]$ denote the expectation of a function
$f(h^T, m^T): {\cal H}^T \times {\cal M}^T \rightarrow \mathbb{R}$
under the joint distribution $PQ$.
When we collect interaction data of the human with a machine policy
$Q(m^T\|h^T)$ we denote expected value under the associated empirical distribution by
$\hat{E}_{P^*Q}$ where in the ideal case (no noise)
we have $\hat{E}_{P^* Q}[{f}(H^T, M^T)]= E_{P^*Q}[{f}(H^T, M^T)]$.
\ifappendix
Those notations are summarized in \cref{tab:notations}.
\fi


\ifappendix
\begin{table}[]
\centering
\caption{Table of notations}
\label{tab:notations}
\begin{tabular}{|l|l|}
\hline
Sequence of human actions & $H^t = \{H_1, H_2, \cdots, H_t\},~0\le t\le T$ \\\hline
Specific realization of human action  & \multirow{2}{*}{$h^t = \{h_1, h_2, \cdots, h_t\},~0\le t\le T$}\\
sequence & \\\hline
Sequence of machine actions & $M^t = \{M_1, M_2, \cdots, M_t\},~0\le t\le T$ \\\hline
Specific realization of machine action  & \multirow{2}{*}{$m^t = \{m_1, m_2, \cdots, m_t\},~0\le t\le T$}  \\
sequence & \\\hline
Joint PMF of $M^T, H^T$ & $p_{H^T, M^T}(h^T, m^T)$   \\\hline
Causally conditional distribution of  & \multirow{2}{*}{$p_{H^T\|M^T}(h^T\|m^T)$ or $P(h^T\|m^T)$}\\
human actions given machine actions &  \\\hline
Causally conditional distribution of  & \multirow{2}{*}{$p_{M^T\|H^T}(m^T\|h^T)$ or $Q(m^T\|h^T)$}\\
machine actions given human actions &  \\\hline
Joint PMF when the human model is  & \multirow{3}{*}{$PQ(h^T, m^T)$}\\
$P(h^T\|m^T)$ and machine model is  &  \\
$Q(m^T\|h^T)$ & \\\hline
Probability of event $A$ when the human & \multirow{3}{*}{$PQ(A)$}\\
model is $P(h^T\|m^T)$ and machine model & \\
 is $Q(m^T\|h^T)$. &\\\hline
Expectation of function of interactions  & \multirow{3}{*}{$\mathbb{E}_{PQ}[f(H^T, M^T)]$} \\
$f(H^T, M^T)$  w.r.t. the joint PMF   & \\
given by $PQ(h^T, m^T)$ & \\ \hline
The actual human behavior model & $P^*(h^T\|m^T)$ or $P^*$ \\ \hline
The estimated human behavior model & \multirow{2}{*}{$\hat{P}(h^T\|m^T) = h(Q)$ or $\hat{P} = h(Q)$} \\
if the machine model is $Q$ & \\ \hline
The machine model if the estimated  & \multirow{2}{*}{$\hat{Q}(m^T\|h^T) = m(P)$ or $\hat{Q} = m(P)$} \\
human model is $P$ & \\ \hline
\end{tabular}
\end{table}
\fi

\subsection{Data-driven human model estimation}

A brute force approach to modeling human behaviour
would be to directly estimate the conditional
distributions $\{P^*(h_t | h^{t - 1}, m^t), t =1,\ldots,T \}$
based on the collected data which is clearly not scalable.
Instead, in this paper we embrace the extension of
constrained maximum entropy estimation to interactive processes developed in~\cite{ZBD13,WZB13}.

In this setting, one defines a set of feature functions
ideally known to capture relevant characteristics of human behavior
which become equality and inequality constraints in the estimation process.
{The choice of such features would be motivated by known
frameworks for understanding {\em human behavior} in dynamic environments, e.g,
the effort accuracy \cite{PBJ93}, exploration-exploitation \cite{MaM10},
soft constraints \cite{GrF04}, and specific character of the human-machine interaction.}
The equality constraints are based on matching
the moments of a set of feature functions
$\mathcal{F}$ denoted by $\mathbf{f}(h^T, m^T):=\{f^i(h^T, m^T),~i \in \mathcal{F}\}$,
and their moments based the empirical distribution
when interacting with a given machine policy $Q$, which are
denoted by $\mathbf{c}_f := \hat{E}_{P^*Q}[\mathbf{f}(H^T,M^T)]$.
Below we will neglect sampling errors by assuming that 
$\hat{E}_{P^*Q}[\mathbf{f}(H^T, M^T)] = E_{P^*Q}[\mathbf{f}(H^T, M^T)].$
The set of inequality constraints are denoted by
$\mathbf{g}(h^T, m^T):=\{g^i(h^T, m^T),~i \in \mathcal{G}\}$,
where $\mathcal{G}$ is another set of
feature functions
whose moments are constrained not to exceed pre-specified thresholds
$\mathbf{c}_g = \{c_g^i,~i \in \mathcal{G}\}$.

Formally, for a given machine policy $Q(m^T\|h^T)$, we are interested in models for
human behaviour $P(h^T\|m^T)$  satisfying the following constraints
%
\begin{align}
\label{eq:moment-matching}
{\mathcal{P}_{\mathcal{F},\mathcal{G}}^Q} = & \{P(h^T\|m^T)~|~
E_{PQ}[\mathbf{f}(H^T,M^T)] = \mathbf{c}_f,
\textrm{and}~E_{PQ}[\mathbf{g}(H^T,M^T)] \ge \mathbf{c}_g\}.
\end{align}

The maximum entropy estimation principle chooses the model for human behaviour in
${\mathcal{P}_{\mathcal{F},\mathcal{G}}^Q}$ with maximum entropy. In the
case of interactive processes, since the machine policy $Q(m^T\|h^T)$
is known we shall maximize the entropy of the causally conditioned
distributions of the human behavior model.
In particular, the causally conditioned entropy of human behaviour model
${P}(h^T \| m^T)$ given machine policy in use is $Q(m^T \| h^T)$, is given by
\begin{equation}
\label{eq:causalentropy}
\mathbb{H}_{{P}Q}(H^T \| M^T)
:= E_{{P}Q}\left[-\log\left({P}(H^T \| M^T)\right)\right] = \sum_{t = 1}^T\mathbb{H}_{{P}Q}(H_t|H^{t-1}, M^t).
\end{equation}

In the sequel we consider optimizing functionals
of the causally conditioned distributions for the human (and the machine).
Doing so means optimizing over a set of
conditioned distributions $\{P(h_{t}|h^{t-1},m^{t}) \, | \, t=1,\ldots,T \}$,
which for simplicity we {\em also} denote by $P(h^T\|m^T).$
It can be shown that these collections of distributions belong
to a convex polytope denoted by ${\cal C}_H$ (resp. ${\cal C}_M$).
Indeed, according to \cite{ZBD11}, $P(h^T\| m^T)\in {\cal C}_H$ is equivalent to
the requirement that $P(h^T\| m^T)$ can be factorized into a product of conditional distributions as in (\ref{eq:human-factor}).
Similar result holds true for $Q(m^T\|h^T)$.
This generalizes the notion of optimizing over
a set of distributions with a given support, e.g.,
over the simplex. In the sequel for the sake of simplicity, we will
omit the constraints $P(h^T\|m^T) \in {\cal C}_H$ and $Q(m^T\|h^T)\in {\cal C}_M$
when they appear in optimization problems--it is assumed to be understood that one is optimizing
over causally conditioned distributions that must be properly normalized.
The overall human estimation problem can thus  be expressed as follows.

\begin{definition}\textbf{(Human estimation problem)}
Given a known machine policy $Q(m^T\|h^T)$ and a set of
moments $\mathbf{c}_f$ associated with human-machine interaction for equality constraints,
the constrained maximum entropy estimate model for human behavior, say
$\hat{P}(h^T \| m^T)$ is the solution to the following problem:
\begin{equation}\label{eqn:step1}
\max\limits_{P(h^T \| m^T)}
\{~ \mathbb{H}_{PQ} (H^T\|M^T)~ |~
P(h^T\|m^T) \in \mathcal{P}^Q_{\mathcal{F}, \mathcal{G}} ~\}.
\end{equation}
\end{definition}
Note that since this problem is convex, the solution $\hat{P}(h^T \| m^T)$ is unique. However,
it depends on underlying machine policy $Q$ both through the cost function and the constraints.


\subsection{Machine optimization}



We assume one has defined a \emph{reward function} $r(h^T, m^T)$ over human-machine interactions.
This function might reflect both desirable human outcomes/decisions as well as machine costs for taking
certain sequences of actions.
Given an estimated  model for the human behaviour, $\hat{P}(h^T\|m^T)$,
one can in turn consider choosing a reward maximizing machine policy, i.e.,
$$
\max_{Q(m^T\|h^T)}~~E_{\hat{P}Q}[r(H^T, M^T)].
$$
A direct optimization of the reward as above would result in machine policies
that take deterministic actions associated with the `best' choices. Such policies
are likely to be vulnerable to 
the error in the estimated human behaviour model due to the sampling noise.
This has also been observed in the context of reinforcement learning,
see, e.g., \cite{AS97,S1996}.
Such machine policies may also be limited in the degree to which `explore' interaction with the human,
and thus subsequently the obtained interaction data may lead
to poor estimates of human behavior and sub-optimal results.
Further, we also posit that
deterministic machine policies have poor characteristics from a human interaction
perspective, e.g., might also be boring/too predictable,
leading to poor engagement \cite{EpN15},
and/or in certain settings may be unfair.
For example, in an advertising setting, one might want to incorporate randomness in
placing advertisements to ensure fairness and/or encourage competition.

To address these concerns we propose adding a `regularizing' entropy term
to the reward function.
Thus given an estimated model for human behavior $\hat{P}$, the machine's policy is obtained
as the solution to the following problem.

\begin{definition}\textbf{(Machine policy optimization problem)}
Given an estimated model for human behavior $\hat{P}(h^T\|m^T)$,
the reward maximizing machine policy is given by the solution to
\begin{equation}\label{eqn:step2}
\begin{aligned}
& \max\limits_{Q(m^T \| h^T)}
&&
\mathbb{H}_{\hat{P}Q} (M^T\| H^T)  + \gamma  E_{\hat{P}Q}[r(H^T,M^T)], \\
\end{aligned}
\end{equation}
\end{definition}
where $\gamma > 0$ controls the degree to which one weighs entropy versus reward in
the machine policies.
We shall realize that this formulation is in fact similar to human estimation problem introduced earlier.

\subsection{Closing the loop: Alternating Reward-Entropy Ascent (AREA) Algorithm}
\label{sec:closing-the-loop}
Note that the optimized machine policy obtained via (\ref{eqn:step2}) depends on a
estimated model for human behavior, which in turn was estimated by solving (\ref{eqn:step1}) based
on data obtained from human-machine interactions using the previously selected machine policy.
The two machine policies need not to be the same, possibly making the estimation and optimization steps
inconsistent.
To resolve this, we propose \emph{Alternating Reward-Entropy Ascent}
(AREA) algorithm exhibited in \Cref{overview-fig}. We begin with
a default machine policy (for example, the machine might choose actions at random),
denoted by $\hat{Q}^{(0)}(m^T\|h^T)$. Under this machine policy we collect data/realizations
of human machine interactions.  Then from the data, we can estimate the feature moments,
which, in turn, enable estimation of a model for human behavior
$\hat{P}^{(0)}$ through our \emph{inference} phase, i.e.,
(\ref{eqn:step1}).  Based on the estimated model of human behavior we generate a new machine policy
through the machine \emph{optimization} phase,
where the optimization is based on $\hat{P}^{(0)}$,
obtaining the next machine policy $\hat{Q}^{(1)}$.
This alternating process generates a sequence of causally conditioned distributions given by
$\hat{Q}^{(0)} \rightarrow \hat{P}^{(0)} \rightarrow \hat{Q}^{(1)} \rightarrow \hat{P}^{(1)} \rightarrow \dots$, which we refer to as AREA iterations.

\begin{figure}[]
\center
\includegraphics[width=0.7\textwidth]{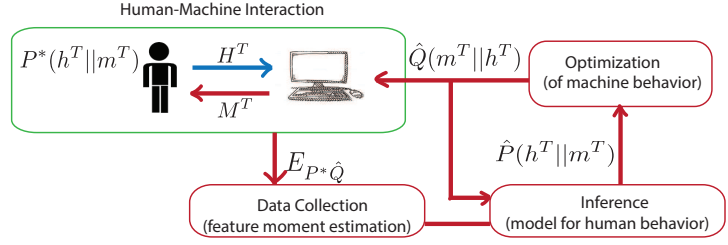}
\caption{Overview of framework for the optimization of human-machine intearctions.}
\label{overview-fig}
\vspace{-1.5em}
\end{figure}

\ifappendix
\section{Related Work}
{ {\em Markov decision processes and reinforcement learning:}}
The optimization of human-machine interactions can in principle be modelled
as a Markov Decision Process (MDP), where the human behavior
can be viewed as driven by a
transition kernel among a set of states,
and the machine behavior corresponds to a sequence
of actions taken in response to the human's behavior. The underlying assumptions are that there
exists a state space for the human and an action space such
that the distribution of future states depends only on the current state (say of the human)
and chosen action (say of the machine). In such a setting, one can define a reward function and
consider optimizing the associated machine policy, see e.g., \cite{Put94,Ber00,BaR11}.
When the transition kernel is unknown,
but assumed Markov, the resulting problem is known as reinforcement learning, see e.g., surveys~\cite{KLM96, SB98}.
Both model-based and model-free reinforcement learning approaches
(and methods that combine both approaches) have been studied in the literature.
Model based methods combine estimation of the environment and optimization
of machine actions, while model-free methods aim to directly optimize the machine
without first estimating a model the environment. For example,
Q-learning aims to directly estimate the value of state-action pair, denoted by $Q(s, a)$, where the $s$ is the current state
and $a$ is the candidate action.
The $Q$-function can be used to select the optimal sequence of machine actions~\cite{KLM96, SB98}.
The traditional framework of reinforcement learning relies heavily on the
assumption that the underlying environment is Markov. However as deep learning technologies
emerge, deep Q-learning \cite{VKD15} have been devised to approximately solve
this problem. Indeed to overcome the difficulties brought by non-Markov environments,
an option is to first
enlarge the state space of the underlying environment significantly,
for example, to include all possible history of the system.
Then use a deep neural network to encode the $Q$-function, and
fit the neural network to the observed data. This approach also has its challenges
in terms of demanding data requirements and might not be applicable to some use cases.

In our framework, when the reward function is decomposable over time
i.e. $r(h^T, m^T) = \sum_{t=1}^T r(h_t, m_t)$, and the estimated human
model $\hat{P}(h^T || m^{T})$ is one-step Markov, the machine optimization program reduces to a
traditional MDP setting, with a possible time-inhomogeneous transition kernel
and the reward function is regularized by an entropy term to promote
exploring different actions.Some recent literature suggests that model based methods may be preferred to model free methods in terms of sample complexity~\cite{AS97,S1996}.
In the special case of a Markov model, our approach may be considered as a variation of model-based reinforcement learning, where the model is learned by maximizing causal entropy subject to moment constraints, and the machine behavior is regularized using the causal entropy of the machine process. As discussed, this analogy no longer holds for the general case.

We are aware of only a few cases where (relative) entropy regularization has been combined with Markov decision processes and related models. \cite{GRW14} consider a generalization of the Markov decision process where,
instead of impacting the process through some actions, the agent can directly manipulate the transition matrix of the system state. However, such manipulation would incur some cost which is proportional to the relative entropy between the transition probability after manipulation, and the transition matrix of a `passive' process which models the `natural' system evolution.
In \cite{MeR05}, the authors propose an entropy-regularized cost function to approximately solve a partially observable Markov decision process (POMDP) model efficiently. Due to the absence of knowledge of the exact system state (i.e. partial observation), the agent must estimate it through the reward it receives and a noisy observation of its current state. Therefore, there is a trade-off between gaining more profit based on current belief -- which requires focusing on the most profitable action, and improving the quality of estimation -- which requires exploring different actions. The authors of \cite{MeR05} used the expectation of entropy in the agent's belief state as a proxy of how well it explored different actions. 
The main
challenge associated with MDP is that the human's behavior transition kernels may have long term
dependencies -- and an extremely large state space may be required state to remain in the Markov setting.

\emph{Bandit problem: }
The state-of-the-art approach to solving the problems with such sequential and interactive context
also includes multi-armed bandit problem and its variants \cite{BuC12}, which are widely discussed and used in
use cases including computational advertising.
In such context, the search
engine uses the user feature including gender, age and searching history as the context,
to pick up an ad, which is modeled as the arm, after each user's query,
such that the user will have a good chance
of clicking through the ad.
The most representative method is the ILOVETOCONBANDITS algorithm proposed
in \cite{AHK14}, where it is assumed that the reward received for each
attempt depends on some observable random `context'.
However the approach depends heavily on the i.i.d. assumption on the
environment, in order to improve the quality of the estimation by accumulating samples.
Therefore when
the user does not make independent decisions or has a long-term memory, the performance of such contextual bandit based
solutions will not be acceptable.

The most general way to model such problems in a multi-armed bandit way is continuum armed bandit, for example,
\cite{TyG13}, where the arms to pick can be a vector of real numbers instead of discrete index.
We can directly model the machine's policy $Q(m^T\|h^T)$ as arms. However when the support of the arm is big, the convergence of the algorithm
is slow, and also it requires a prior knowledge of the number of iterations we need, thus cannot be
implemented in a fully online manner.

\fi
\section{Solution to AREA's Optimization Problems}\label{sec:solution}

The Lagrangians for the optimization problems (\ref{eqn:step1}) and (\ref{eqn:step2}) have similar forms.
We shall begin our discussion of the solution approach, based on \cite{ZBD13},
for the human estimation problem and subsequently that of the machine optimization,
pointing out some key results and notation that will be critical for our development in the sequel.

\subsection{Solution to human estimation problem}
\label{sec:solution-estimation}

It has been shown in \cite{ZBD13}
that the human estimation problem
is concave in $P(h^T\|m^T)$ given $Q(m^T\|h^T)$, and the solution can be found by its dual.
\ifappendix
\begin{theorem}\cite{ZBD13} \label{thm:solution}
The dual form of the human estimation problem (\ref{eqn:step1}) is given by:
\begin{align}\label{eq:estimate-dual}
\min_{\substack{\bm{\lambda} = (\bm{\lambda_f}, \bm{\lambda_g}),\\ \bm{\lambda}_g \le 0}}
\sum_{m_1}Q(m_1)\log Z_{\bm{\lambda}}(m_1)
 -\bm{\lambda}_f^T\mathbf{c}_f -\bm{\lambda}_g^T\mathbf{c}_g
\end{align}
where
\begin{equation}
Z_{\bm{\lambda}}({h}^t, {m}^{t+1}) = \sum_{h_{t+1}}Z_{\bm\lambda}(h_{t+1}|{h}^t, {m}^{t+1}),~~Z_{\bm\lambda}(m_1) = \sum_{h_1}Z_{\bm\lambda}(h_1|m_1)
\end{equation}
and
\begin{align}
Z_{\bm\lambda}(h_t|h^{t-1}, m^{t})
 =  \left\{ \begin{array}{ll}
e^{\sum_{m_{t+1}}Q(m_{t+1}|h^t, m^t)\log Z_{\bm\lambda}(h^t, m^{t+1})} & t < T \\
e^{\bm{\lambda}_f^T \mathbf{f}_1(h^{T}, m^T) + \bm{\lambda}_g^T \mathbf{f}_2(h^T, m^T)} & t = T
\end{array}
\right.
,
\label{eq:backward}\end{align}
The associated human model for dual variables
$\boldsymbol{\lambda}$ is given by
$
{P}_{\bm\lambda}(h_t|h^{t-1}, m^t) = \frac{Z_{\bm\lambda}(h_t|h^{t-1}, m^{t})}{Z_{\bm\lambda}(h^{t-1}, m^{t})}.
$
\end{theorem}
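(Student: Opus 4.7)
The plan is to form the Lagrangian of the primal problem (\ref{eqn:step1}), exploit its concavity in the conditional distributions $\{P(h_t|h^{t-1},m^t)\}$ for fixed $Q$, and derive the dual by solving a sequence of inner maximizations via backward induction on $t$. The recursive partition function in (\ref{eq:backward}) should emerge naturally from the sequential dependence structure, combined with the chain-rule decomposition (\ref{eq:causalentropy}) of causal entropy.

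First I would introduce multipliers $\bm\lambda_f$ for the equality constraints, $\bm\lambda_g\le 0$ for the inequality constraints, and, for every triple $(t,h^{t-1},m^t)$, a scalar $\nu_t(h^{t-1},m^t)$ enforcing $\sum_{h_t}P(h_t|h^{t-1},m^t)=1$. Using the chain-rule expansion of $\mathbb{H}_{PQ}(H^T\|M^T)$ together with the factorization $PQ(h^T,m^T)=P(h^T\|m^T)\,Q(m^T\|h^T)$, the Lagrangian becomes a sum over $t$ of per-stage terms of the form $-P(h_t|h^{t-1},m^t)\log P(h_t|h^{t-1},m^t)$ weighted by the marginal $PQ(h^{t-1},m^t)$, plus the feature terms $\bm\lambda_f^{\!\top}\mathbf{f}+\bm\lambda_g^{\!\top}\mathbf{g}$ averaged under $PQ$, minus $\bm\lambda_f^{\!\top}\mathbf{c}_f+\bm\lambda_g^{\!\top}\mathbf{c}_g$. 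Because these weights themselves depend on earlier conditionals, I would first fix $\bm\lambda$ and write the KKT stationarity conditions running in reverse time.

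At $t=T$, only the terminal feature terms appear multiplicatively against $P(h_T|h^{T-1},m^T)$ in an otherwise one-step entropy maximization, whose solution is the exponential form $P_{\bm\lambda}(h_T|h^{T-1},m^T)=Z_{\bm\lambda}(h_T|h^{T-1},m^T)/Z_{\bm\lambda}(h^{T-1},m^T)$ with $Z_{\bm\lambda}(h_T|\cdot)=e^{\bm\lambda_f^{\!\top}\mathbf{f}+\bm\lambda_g^{\!\top}\mathbf{g}}$. Plugging this optimum back into the Lagrangian collapses the stage-$T$ entropy plus feature terms into the scalar $\log Z_{\bm\lambda}(h^{T-1},m^T)$ averaged against $Q(m_T|h^{T-1},m^{T-1})$ and the earlier $P(h_s|\cdot)$; this averaged log-partition is exactly the additive contribution that appears at stage $T-1$, producing the recursion (\ref{eq:backward}) once I peel off one more time step and solve the analogous one-step problem. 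Iterating backward to $t=1$ reduces the whole objective to a function of $\bm\lambda$ alone, and its value at the inner maximum is precisely $\sum_{m_1}Q(m_1)\log Z_{\bm\lambda}(m_1)-\bm\lambda_f^{\!\top}\mathbf{c}_f-\bm\lambda_g^{\!\top}\mathbf{c}_g$, yielding (\ref{eq:estimate-dual}).

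The main obstacle is bookkeeping the backward induction cleanly. Specifically, I would need to check that at each stage the expectation over $Q(m_{t+1}|h^t,m^t)$ acts only inside the exponent (because $Q$ is held fixed, not optimized) — this is what produces the $e^{\sum_{m_{t+1}}Q(m_{t+1}|h^t,m^t)\log Z_{\bm\lambda}(h^t,m^{t+1})}$ form rather than a linear averaging — and that the exponential form of $P_{\bm\lambda}(h_t|h^{t-1},m^t)$ is preserved through the recursion, so that the closed-form recovery $P_{\bm\lambda}(h_t|h^{t-1},m^t)=Z_{\bm\lambda}(h_t|h^{t-1},m^t)/Z_{\bm\lambda}(h^{t-1},m^t)$ stated in the theorem holds at every stage. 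Convexity of the dual and concavity of the primal, together with strict feasibility of the constraint set (Slater's condition), would then close the argument by strong duality.
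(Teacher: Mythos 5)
Your derivation is essentially the argument the paper relies on: the paper imports \Cref{thm:solution} from \cite{ZBD13} without reproducing a proof, but its appendix proof of the twin result (\Cref{thm:solution-machine}) proceeds exactly as you propose --- form the Lagrangian with normalization multipliers, write the per-stage stationarity conditions, and verify that the recursively defined partition function satisfies them, with the geometric (log-domain) averaging over $Q(m_{t+1}|h^t,m^t)$ arising precisely because $Q$ is held fixed. The only caution is that concavity and strong duality are most cleanly stated in the causally conditioned distribution $P(h^T\|m^T)$ ranging over the polytope ${\cal C}_H$, where the moment constraints are linear, rather than in the individual conditionals $P(h_t|h^{t-1},m^t)$; this does not affect your backward-induction derivation of the optimal form, but it is the right setting in which to invoke Slater's condition.
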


The optimal dual $\bm\lambda^*$ can be found by subgradient-based method, see \cite{ZBD13}
or \Cref{sec:solution-estimation-dual}.
\else
\fi
In the sequel it will be useful to denote the solution to the human estimation problem
by $h^*(Q, \mathbf{c}_f, \mathbf{c}_g)$ to make clear its dependence on $Q$
the machine policy,  $\mathbf{c}_f$ the feature moments estimated from human-machine interactions,
and the constants $\mathbf{c}_g.$

\ifappendix
The solution given in \cref{thm:solution} has several interpretations, two of which are given in following two theorems.
\begin{theorem}\label{thm:min-logloss}
\cite{ZBD13} Using statistics from the true distribution without sampling error, maximizing the causal entropy subject to feature constraints in human estimation problem is equivalent to maximizing the log causal likelihood of the true distribution over the family of \emph{causal Gibbs distributions}.
\begin{equation}
\max_{\bm\lambda} E_{P^*Q}[\log P_{\bm\lambda}(H^T\|M^T)]
\end{equation}
\end{theorem}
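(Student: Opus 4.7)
The plan is to establish this MaxEnt/maximum-likelihood duality by computing $E_{P^*Q}[\log P_{\bm\lambda}(H^T\|M^T)]$ in closed form and identifying it, up to sign, with the dual objective of~(\ref{eq:estimate-dual}) in \cref{thm:solution}. Once that identification is in place, the equivalence with the primal causal-MaxEnt program~(\ref{eqn:step1}) follows from strong duality, which applies because the primal is concave in $P(h^T\|m^T)\in\mathcal{C}_H$ with affine moment constraints (as noted preceding \cref{thm:solution}).

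The core calculation is as follows. I would start by expanding the Gibbs form ${P}_{\bm\lambda}(h_t|h^{t-1},m^t) = Z_{\bm\lambda}(h_t|h^{t-1},m^t)/Z_{\bm\lambda}(h^{t-1},m^t)$ to write
\[
\log P_{\bm\lambda}(h^T\|m^T) = \sum_{t=1}^T\bigl[\log Z_{\bm\lambda}(h_t|h^{t-1},m^t) - \log Z_{\bm\lambda}(h^{t-1},m^t)\bigr].
\]
Using the backward recursion~(\ref{eq:backward}), the first summand equals $\bm\lambda_f^T\mathbf{f}(h^T,m^T) + \bm\lambda_g^T\mathbf{g}(h^T,m^T)$ at $t=T$, and equals $\sum_{m_{t+1}}Q(m_{t+1}|h^t,m^t)\log Z_{\bm\lambda}(h^t,m^{t+1})$ for $t<T$. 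Taking the $E_{P^*Q}$-expectation and invoking the tower property---under $P^*Q$ the conditional law of $M_{t+1}$ given $(H^t,M^t)$ is exactly $Q(\cdot|H^t,M^t)$---the $t<T$ contribution simplifies to $E_{P^*Q}[\log Z_{\bm\lambda}(H^t,M^{t+1})]$. After an index shift these match all but the $t=1$ term of the second sum, producing a full telescoping cancellation. Using $\mathbf{c}_f = E_{P^*Q}[\mathbf{f}]$ and $\mathbf{c}_g = E_{P^*Q}[\mathbf{g}]$ (valid in the sampling-noise-free regime assumed in the statement), what remains is
\[
E_{P^*Q}\bigl[\log P_{\bm\lambda}(H^T\|M^T)\bigr] = \bm\lambda_f^T\mathbf{c}_f + \bm\lambda_g^T\mathbf{c}_g - \sum_{m_1}Q(m_1)\log Z_{\bm\lambda}(m_1),
\]
which is precisely the negation of the objective of~(\ref{eq:estimate-dual}). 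Maximizing over $\bm\lambda$ (with $\bm\lambda_g\le 0$ so that $P_{\bm\lambda}$ sits in the causal Gibbs family associated with \cref{thm:solution}) therefore equals the negative of the dual minimum, and by strong duality this equals the primal MaxEnt optimum in~(\ref{eqn:step1}).

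The main technical obstacle I anticipate is keeping the telescoping bookkeeping correct: the recursion for $\log Z_{\bm\lambda}(h_t|h^{t-1},m^t)$ hides an expectation over $Q(m_{t+1}|h^t,m^t)$, so the tower property must be applied \emph{before} any index shift is used to align and cancel terms. A secondary concern is to carry the sign restriction $\bm\lambda_g\le 0$ into the likelihood-maximization formulation, but this is not a loss of generality since, by \cref{thm:solution}, the primal optimum of~(\ref{eqn:step1}) is already attained at a $\bm\lambda^*$ in this half-space.
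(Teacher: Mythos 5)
The paper does not actually prove this statement---it is imported verbatim from \cite{ZBD13} with no argument given in the text or appendix---so you have reconstructed the proof rather than paralleled one. Your reconstruction is the standard one and the core of it is sound: the telescoping identity
\begin{equation*}
E_{P^*Q}\bigl[\log P_{\bm\lambda}(H^T\|M^T)\bigr] \;=\; \bm\lambda_f^T\mathbf{c}_f \;+\; \bm\lambda_g^T E_{P^*Q}[\mathbf{g}(H^T,M^T)] \;-\; \sum_{m_1}Q(m_1)\log Z_{\bm\lambda}(m_1)
\end{equation*}
does follow from (\ref{eq:backward}) exactly as you describe, with the tower property applied before the index shift (the $t<T$ terms of the first sum become $E_{P^*Q}[\log Z_{\bm\lambda}(H^t,M^{t+1})]$ and cancel the $t\ge 2$ terms of the second sum, leaving only the $t=T$ feature term and the $t=1$ term $\sum_{m_1}Q(m_1)\log Z_{\bm\lambda}(m_1)$), and strong duality for (\ref{eqn:step1}) is available via Theorem~4 of \cite{ZBD13}, as the appendix notes.

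The one place you over-reach is the identification of your expression with the negation of (\ref{eq:estimate-dual}). That requires $\bm\lambda_g^T E_{P^*Q}[\mathbf{g}]=\bm\lambda_g^T\mathbf{c}_g$, but in this paper $\mathbf{c}_g$ is a vector of \emph{pre-specified thresholds}, not moments of the true distribution, so ``no sampling error'' gives you $\mathbf{c}_f=E_{P^*Q}[\mathbf{f}]$ only. In general one only gets $\ell(\bm\lambda)=-D(\bm\lambda)+\bm\lambda_g^T(E_{P^*Q}[\mathbf{g}]-\mathbf{c}_g)\le -D(\bm\lambda)$ when $\bm\lambda_g\le 0$ and $P^*$ is feasible, so the two objectives are not equal pointwise and their maximizers need not coincide. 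The clean equivalence you want is the equality-constraint case ($\mathcal{G}=\emptyset$), which is the setting of the original theorem in \cite{ZBD13}; you should either restrict to that case or add an explicit hypothesis that the thresholds are set to the true moments. With that caveat stated, your argument is complete.
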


\begin{theorem}
\cite{ZBD13} The human estimation problem is equivalent to minimizing the \emph{worst case causal log-loss} when the true human behavior is chosen adversarially.
\begin{eqnarray}
\inf_{P(h^T\|m^T)}\sup_{P^*(h^T\|m^T)} && E_{P^*Q}[-\log P(H^T\|M^T)] \nonumber\\
\textrm{s.t. } && E_{PQ}[\mathbf{f}(H^T, M^T)]  = E_{P^*Q}[\mathbf{f}(H^T, M^T)]
\end{eqnarray}
\end{theorem}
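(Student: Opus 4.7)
The plan is to bracket the minimax value from both sides and show it equals the MaxEnt optimal value $V^{*} := \mathbb{H}_{\hat{P}Q}(H^T\|M^T)$ of (\ref{eqn:step1}), where $\hat{P}$ is the Gibbs-form solution from \cref{thm:solution}. The key algebraic fact I would exploit is that, for any pair $(P,P^{*})$ of causally conditioned distributions,
\begin{equation*}
E_{P^{*}Q}[-\log P(H^T\|M^T)] \;=\; \mathbb{H}_{P^{*}Q}(H^T\|M^T) \;+\; D(P^{*}Q\,\|\,PQ) \;\ge\; \mathbb{H}_{P^{*}Q}(H^T\|M^T),
\end{equation*}
because the ordinary KL divergence between the joint laws reduces (the $Q$-factors cancel in the ratio $P^{*}Q/PQ$) to the causal divergence $E_{P^{*}Q}[\log(P^{*}/P)]\ge 0$ by Jensen's inequality. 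Interpreting the joint constraint $E_{PQ}[\mathbf{f}]=E_{P^{*}Q}[\mathbf{f}]$ as asking both $P$ and $P^{*}$ to share the empirical moments $\mathbf{c}_f$, taking the inner supremum gives
\begin{equation*}
\sup_{P^{*}} E_{P^{*}Q}[-\log P(H^T\|M^T)] \;\ge\; \sup_{P^{*}:\,E_{P^{*}Q}[\mathbf{f}]=\mathbf{c}_f}\mathbb{H}_{P^{*}Q}(H^T\|M^T) \;=\; V^{*},
\end{equation*}
which is independent of $P$ and therefore passes through the outer infimum, so $\inf_P\sup_{P^{*}}E_{P^{*}Q}[-\log P]\ge V^{*}$.

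For the matching upper bound I would plug in $P=\hat{P}$ and show the inner supremum equals exactly $V^{*}$ by proving that $E_{P^{*}Q}[\log\hat{P}(H^T\|M^T)]$ depends on $P^{*}$ only through its feature moments. Writing $\log\hat{P}(h^T\|m^T)=\sum_{t=1}^{T}\bigl[\log Z_{\bm\lambda}(h_t|h^{t-1},m^t)-\log Z_{\bm\lambda}(h^{t-1},m^t)\bigr]$ and taking $E_{P^{*}Q}$ termwise, the backward recursion (\ref{eq:backward}) combined with the tower property yields $E_{P^{*}Q}[\log Z_{\bm\lambda}(H_t|H^{t-1},M^t)]=E_{P^{*}Q}[\log Z_{\bm\lambda}(H^t,M^{t+1})]$ for $t<T$, which precisely cancels the subtractive term $E_{P^{*}Q}[\log Z_{\bm\lambda}(H^t,M^{t+1})]$ appearing at index $t+1$. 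The sum telescopes, collapsing to the $t=T$ boundary contribution $\bm\lambda^{T}E_{P^{*}Q}[\mathbf{f}(H^T,M^T)]$ minus the $t=1$ boundary term $\sum_{m_1}Q(m_1)\log Z_{\bm\lambda}(m_1)$, the latter being a function of $Q$ alone. Hence whenever $E_{P^{*}Q}[\mathbf{f}]=\mathbf{c}_f=E_{\hat{P}Q}[\mathbf{f}]$, $E_{P^{*}Q}[-\log\hat{P}]=E_{\hat{P}Q}[-\log\hat{P}]=V^{*}$, giving $\inf_P\sup_{P^{*}}E_{P^{*}Q}[-\log P]\le V^{*}$ and thus equality.

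The principal obstacle I anticipate is executing the telescoping step cleanly: one must distinguish the two flavours of partition function, $Z_{\bm\lambda}(\cdot\mid\cdot)$ versus $Z_{\bm\lambda}(\cdot,\cdot)$, in (\ref{eq:backward}), treat the $t=T$ boundary separately, and justify exchanging the $P^{*}Q$-expectation with the machine-policy average appearing inside $\log Z_{\bm\lambda}(h_t|h^{t-1},m^t)$ via the tower property---this relies on $Q$ driving the machine marginals of both $PQ$ and $P^{*}Q$. The payoff is that this telescoping extends the classical sufficient-statistic property of exponential families, under which the Gibbs log-likelihood is linear in the empirical moments, to the causally conditioned setting; that linearity is exactly what makes the robust (minimax) reading of causal MaxEnt go through.
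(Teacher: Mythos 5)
The paper itself offers no proof of this statement---it is quoted verbatim from \cite{ZBD13}---so there is nothing internal to compare against; judged on its own, your argument is correct and is essentially the standard Gr\"unwald--Dawid robust-Bayes duality transported to causally conditioned distributions, which is also how the cited reference establishes it. Both halves check out: the lower bound via $E_{P^*Q}[-\log P]=\mathbb{H}_{P^*Q}(H^T\|M^T)+E_{P^*Q}[\log(P^*/P)]$ with the $Q$-factors cancelling in the likelihood ratio, and the upper bound via the telescoping identity $E_{P^*Q}[\log P_{\bm\lambda}(H^T\|M^T)]=\bm\lambda^T E_{P^*Q}[\mathbf{f}(H^T,M^T)]-\sum_{m_1}Q(m_1)\log Z_{\bm\lambda}(m_1)$, whose validity rests exactly on the point you flag: under \emph{any} $P^*Q$ the conditional law of $M_{t+1}$ given $(H^t,M^t)$ is $Q(\cdot|h^t,m^t)$, so the tower property converts $\log Z_{\bm\lambda}(h_t|h^{t-1},m^t)$ into the conditional expectation of $\log Z_{\bm\lambda}(H^t,M^{t+1})$ and the sum collapses to the boundary terms. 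The only caveat worth recording is the usual support condition in the lower bound (if $P$ vanishes somewhere on the support of $P^*Q$ the log-loss is $+\infty$, which only strengthens the inequality), and the observation that your reading of the constraint---both players matching $\mathbf{c}_f$---is immaterial, since the lower bound uses no constraint on $P$ and the upper bound only needs $\hat{P}$ to be feasible.
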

\fi

\subsection{Solution to machine optimization problem}

It should be clear at this point that the
the objective function in (\ref{eqn:step2})
is similar to the Lagrangian of Problem (\ref{eqn:step1})
with a fixed `dual variable' $\gamma$.
Thus the following result is fairly straightforward.

\begin{theorem}\label{thm:solution-machine}
For a given model of human behavior $\hat{P}(h^T\|m^T)$
the solution to the machine optimization problem (\ref{eqn:step2}),
$\hat{Q}(m^T\|h^T)$ is given as follows.
Let
\begin{align}
Y_{\gamma}(m_t|h^{t-1}, m^{t-1})
 =  \left\{ \begin{array}{ll}
e^{\sum_{h_t}\hat{P}(h_t|h^{t-1}, m^t)\log Y_{\gamma}(h^t, m^t)} & t < T \\
e^{\gamma \sum_{h_T}\hat{P}(h_T | h^{T-1}, m^T)r(h^T, m^T)} & t = T
\end{array}
\right.,
\label{eq:backward-machine}\end{align}
where
$Y_\gamma(h^t, m^t) = \sum_{m_{t + 1}}Y_\gamma (m_{t+1} | h^{t}, m^t),~~Y_\gamma = \sum_{m_1}Y_\gamma(m_1)$.
Then the optimal machine policy is
$
\hat{Q}(m_t|h^{t-1}, m^{t-1}) = \frac{Y_{\gamma}(m_t|h^{t-1}, m^{t-1})}{Y_\gamma(h^{t-1}, m^{t-1})}
$
and
$
\hat{Q}(m_1) = \frac{Y_\gamma(m_1)}{Y_\gamma}.
$
\end{theorem}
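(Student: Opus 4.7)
The plan is to exploit the structural parallel between the machine optimization problem (\ref{eqn:step2}) and the human estimation problem (\ref{eqn:step1}): with $\hat{P}$ fixed, the objective $\mathbb{H}_{\hat{P}Q}(M^T\|H^T) + \gamma E_{\hat{P}Q}[r(H^T, M^T)]$ is of the form ``causal entropy + linear functional of $PQ$,'' which is exactly the Lagrangian of an entropy maximization subject to a single equality constraint with the multiplier frozen at $\gamma$. So I can mimic the backward-recursion characterization of $P_{\bm\lambda}$ from Theorem~1, with the roles of the human and the machine swapped.

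Concretely, I would proceed by backward induction in $t$. By the chain-rule decomposition analogous to (\ref{eq:causalentropy}), $\mathbb{H}_{\hat{P}Q}(M^T\|H^T) = \sum_{t=1}^T E_{\hat{P}Q}[-\log Q(M_t|H^{t-1}, M^{t-1})]$, so the objective is additively separable across time, while the constraint $Q \in \mathcal{C}_M$ reduces to normalizing each $Q(\cdot|h^{t-1}, m^{t-1})$ independently on the simplex. This lets me define value functions $V_{t+1}(h^t, m^t)$ representing the optimal entropy-regularized ``reward-to-go'' from state $(h^t, m^t)$, with terminal condition $V_{T+1}(h^T, m^T) = \gamma r(h^T, m^T)$ and Bellman-style recursion
\[
V_t(h^{t-1}, m^{t-1}) = \max_{q(\cdot)} \sum_{m_t} q(m_t)\bigl[-\log q(m_t) + U_t(h^{t-1}, m^t)\bigr],
\]
where $U_t(h^{t-1}, m^t) = \sum_{h_t} \hat{P}(h_t|h^{t-1}, m^t)\, V_{t+1}(h^t, m^t)$.

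At each step this is the classical ``entropy plus linear'' maximization over the simplex, whose unique maximizer is the Gibbs form $q^*(m_t) \propto e^{U_t(h^{t-1}, m^t)}$, with optimal value $V_t(h^{t-1}, m^{t-1}) = \log \sum_{m_t} e^{U_t(h^{t-1}, m^t)}$ obtained either via Lagrange multipliers or the standard variational identity. Identifying $Y_\gamma(m_t|h^{t-1}, m^{t-1}) := e^{U_t(h^{t-1}, m^t)}$ and $Y_\gamma(h^{t-1}, m^{t-1}) := e^{V_t(h^{t-1}, m^{t-1})}$ then reproduces both displayed recursions in (\ref{eq:backward-machine}) and the claimed softmax form of $\hat{Q}$ follows immediately.

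The main obstacle is merely careful bookkeeping at the terminal time, since $r(h^T, m^T)$ depends on $h_T$, which the human selects \emph{after} $m_T$. Folding the expectation $\sum_{h_T} \hat{P}(h_T|h^{T-1}, m^T)\,\gamma r(h^T, m^T)$ into $U_T(h^{T-1}, m^T)$ yields precisely the $t = T$ base case of $Y_\gamma$ in (\ref{eq:backward-machine}). Concavity of the whole objective in $Q$ (sum of the concave causal entropy and a term linear in $PQ$) together with strict concavity of the stage-wise entropy subproblem gives both existence and uniqueness of the maximizer, so no additional duality argument is needed beyond the local softmax calculation.
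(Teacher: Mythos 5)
Your proof is correct, but it takes a different route from the paper's. The paper proves the theorem by a ``guess and verify'' argument on the full problem: it forms the Lagrangian with one normalization multiplier $\beta(h^{t-1},m^{t-1})$ per conditional, differentiates with respect to $Q(m_t|h^{t-1},m^{t-1})$ (the gradient involves the further-conditioned causal entropy $\mathbb{H}_{\hat{P}Q}(M^T\|H^T|h^{t-1},m^t)$ and the conditional expected reward), and then checks that substituting the claimed $Y_\gamma$-form together with a particular choice of $\beta$ makes the gradient vanish; concavity of (\ref{eqn:step2}) in $Q$ then yields global optimality. You instead \emph{derive} the solution constructively by backward induction: a soft-Bellman recursion with $V_{T+1} = \gamma r$, the averaging step $U_t = \sum_{h_t}\hat{P}(h_t|h^{t-1},m^t)V_{t+1}(h^t,m^t)$, and the stage-wise ``entropy plus linear'' maximization over the simplex whose unique maximizer is the softmax $q^*\propto e^{U_t}$ with log-sum-exp value --- identifying $e^{U_t}$ with $Y_\gamma(m_t|h^{t-1},m^{t-1})$ and $e^{V_t}$ with $Y_\gamma(h^{t-1},m^{t-1})$ recovers exactly (\ref{eq:backward-machine}). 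One caveat: your claim that the objective is ``additively separable across time'' is loose as stated, since the visitation probabilities of histories depend on the earlier conditionals of $Q$; what actually justifies the decoupling is the principle of optimality on the history tree (the continuation value from $(h^t,m^t)$ depends only on future conditionals, and visitation weights are nonnegative), which your value-function construction implicitly supplies. Your approach has the advantage of explaining where the Gibbs form comes from and matching the conditional-expectation quantities in the paper's gradient with explicit value functions; the paper's approach is shorter once the answer is known and sidesteps the optimality-principle bookkeeping.
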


\ifappendix
Please see \cref{sec:proof-thm:solution-machine} for detailed proof.
\fi

In the sequel it will be useful to represent the result stated in \Cref{thm:solution-machine} as follows.
In particular the auxiliary function
$\mathbf{Y}_\gamma := \{Y_\gamma(m_t|h^{t-1}, m^{t-1}), \forall 1\le t \le T\}$
generated by the procedure given in \Cref{thm:solution-machine}
depends on the human model and  so is denoted by $\mathbf{Y}_\gamma = m(\hat{P}).$
The associated optimal machine policy  $\hat{Q}$ is in turn
a function of $\mathbf{Y}_\gamma$ denoted by $\hat{Q} = m^*(\mathbf{Y}_\gamma).$

\section{Complexity of AREA Algorithm}\label{sec:complexity}
As can be seen, the dual problem of human estimation problem
is over a vector $\boldsymbol{\lambda}$ of dimension $|\mathcal{F}| +|\mathcal{G}|$.
The authors of \cite{ZBD13} shows that we can find the optimal dual variables by a recursion
only involves computing the expectation of feature functions,
respect to joint distribution $P_{\boldsymbol{\lambda}}Q$,
where $P_{\boldsymbol{\lambda}}$ is the human distributional model
associated with $\boldsymbol{\lambda}$.
However when updating the dual variables, computing those expectations
are intractable
in the most general setting.
Specifically, if we define the space complexity as the number of variables that need to be stored,
and the time complexity
as the number of basic math operations (e.g. addition, multiplication and exponential function evaluation)
required to carry out the update,
we can see that because the number of conditioning sequences in
\ifappendix(\ref{eq:backward})\else the solution to the human estimation problem \fi
grows exponentially in $T$, thus if we need to put all conditional PMFs into the memory
and then compute the joint PMF accordingly, both space and time complexities required
are exponential in $T$.
Fortunately, when the feature functions have specific forms, the
complexity of computing such updates can be reduced. Specifically,
we will discuss cases where
one iteration of AREA algorithm described in \Cref{sec:closing-the-loop}
has polynomial complexity in $T$.

\begin{definition}
A feature function $f(h^T, m^T)$ is said to be decomposable if it can be written as
$
f(h^T, m^T) = \sum_{t = 1}^T f_t(h_t,m_t).
$
\end{definition}

\begin{definition}
A function $f(h^T, m^T)$ is said to be path-based if it is proportional to the indicator function
of a specific realization of the human-machine interaction, say $(\bar{h}^T, \bar{m}^T)$, i.e.,
$
f(h^T, m^T) = c \mathbf{1}_{\{ (h^T, m^T) = (\bar{h}^T, \bar{m}^T) \}}.
$
\end{definition}

Note that it is always desirable to include the
reward function in the equality feature set ${\cal F}$ to
ensure that the estimated human model matches
the true human behavior in terms of the associated
mean rewards. Then we have the following result.

\begin{theorem}\label{thm:complexity}
Suppose the reward function $r(h^T,m^T)$ can be written as a sum of
a decomposable function and a set $\mathcal{R}_p$ of path-based functions,
and the remaining feature functions are either decomposable or path-based,
i.e., $\mathcal{F} = \mathcal{F}_p \cup \mathcal{F}_d \cup \{r(h^T, m^T)\}$
and $\mathcal{G} = \mathcal{G}_p \cup \mathcal{G}_d$,
where $\mathcal{F}_p$ and $\mathcal{G}_p$ denote path-based
equality/inequality features, and
$\mathcal{F}_d$ and $\mathcal{G}_d$ decomposable
equality/inequality features, respectively.
Suppose further that the initial machine's policy $\hat{Q}^{(0)}$ is
uniformly random.
Then the space complexity of each dual update of human estimation problem is
$O\left((|\mathcal{F}_{p}| + |\mathcal{G}_{p}| + |\mathcal{R}_p|)T|\mathcal{H}||\mathcal{M}|\right)$,
and the time complexity of each dual update is $O(T(|\mathcal{F}_{p}| + |\mathcal{G}_{p}| + |\mathcal{R}_p|) \max(T, |\mathcal{H}||\mathcal{M}|))$.
The time and space complexity of the machine optimization problem are both
$O((|\mathcal{F}_{p}| + |\mathcal{G}_{p}| + |\mathcal{R}_p|)T|\mathcal{H}||\mathcal{M}|)$.
\end{theorem}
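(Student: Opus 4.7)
The proof plan hinges on showing inductively over AREA iterations that both the estimated human model $\hat{P}^{(k)}$ and the machine policy $\hat{Q}^{(k)}$ admit a ``Markov plus path deviations'' representation with only polynomial storage, and that the backward recursions that produce them can be executed in polynomial time. The base case is trivial: the initial policy $\hat{Q}^{(0)}(m_t|h^{t-1},m^{t-1}) = 1/|\mathcal{M}|$ is factorized and stored in $O(1)$ space, and the induction hypothesis will assert that $\hat{Q}^{(k)}$ has this structure at the start of each subsequent iteration.

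For the inductive step on the human estimation dual, I fix $\bm{\lambda}$ and analyze the backward recursion for $Z_{\bm{\lambda}}(h_t \mid h^{t-1}, m^t)$. Restricting first to the decomposable features, the Gibbs exponent at time $T$ collapses to $\sum_{t=1}^T \psi_t(h_t,m_t)$ for some single-step functions $\psi_t$ determined by $\bm{\lambda}$. Together with the factorized machine policy guaranteed by the inductive hypothesis, a straightforward backward induction shows that $Z_{\bm{\lambda}}(h_t\mid h^{t-1},m^t)$ depends only on $(t,h_t,m_t)$, so the decomposable part of $P_{\bm{\lambda}}$ is one-step Markov and requires $O(T|\mathcal{H}||\mathcal{M}|)$ space. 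Each path-based feature $f^i(h^T,m^T) = c_i\mathbf{1}_{(h^T,m^T) = (\bar{h}_i^T,\bar{m}_i^T)}$ contributes a multiplicative factor in the exponent that differs from $1$ only on one deterministic trajectory; propagating it backward perturbs $Z_{\bm{\lambda}}$ only at the $T$ prefixes of $(\bar{h}_i^T,\bar{m}_i^T)$ together with their one-step deviations, adding $O(T|\mathcal{H}||\mathcal{M}|)$ storage per path-based feature. The subgradient needed to update $\bm{\lambda}$ reduces to expected single-step marginals (for decomposable features) or single-trajectory probabilities (for path-based features), and both quantities can be obtained by a single forward sweep, yielding the stated time bound $O(T(|\mathcal{F}_p|+|\mathcal{G}_p|+|\mathcal{R}_p|)\max(T,|\mathcal{H}||\mathcal{M}|))$.

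For the machine optimization step, the backward recursion for $Y_\gamma$ in Theorem \ref{thm:solution-machine} has exactly the same structural form as the human estimation recursion, with $\gamma\, r(h^T,m^T)$ playing the role of the Gibbs exponent and $\hat{P}$ playing the role of $Q$. Since $r$ is by assumption a sum of a decomposable function and path-based terms, and $\hat{P}$ inherits the Markov-plus-path-deviations structure from the preceding estimation step, the identical counting argument applies: $\hat{Q}^{(k+1)}$ itself retains the same factorized-plus-path-corrections form, closing the induction and giving the claimed complexity bounds for the machine optimization step.

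The main obstacle is bookkeeping the joint effect of multiple path-based features, since at time $t$ a single history may sit on the prefix of several different special paths simultaneously, and the path-based perturbations to $Z_{\bm{\lambda}}$ may interact with the path-based structure of $\hat{P}$ (from previous iterations) in the machine optimization recursion. I will resolve this by indexing the ``non-generic'' recursion values by which path-based features they are affected by; the key observation is that only $O(T|\mathcal{H}||\mathcal{M}|)$ histories per path-based feature ever depart from the generic decomposable value, so the total count of distinct values stored across one full update remains linear in $|\mathcal{F}_p|+|\mathcal{G}_p|+|\mathcal{R}_p|$, as required.
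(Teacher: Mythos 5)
Your proposal follows essentially the same route as the paper's proof: establish the one-step Markov structure of $Z_{\bm{\lambda}}$ (and of $Y_\gamma$) from the decomposable features, track the path-based perturbations only along the prefixes of the special trajectories and their one-step deviations, and close the induction by noting that the optimized machine policy retains the Markov-plus-path-corrections form. The only detail you gloss over is how the moments of decomposable features are computed in the presence of path-based perturbations (the paper introduces a stopping time $T_D$ marking departure from all path supports, which is where the $\max(T,|\mathcal{H}||\mathcal{M}|)$ factor arises), but this is a refinement of the same argument rather than a different approach.
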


\ifappendix
Please see \cref{sec:proof-thm:complexity} for detailed proof.
\fi

\emph{Remark:} We envisage that the inclusion of path-based and decomposable feature
and reward functions might allows a fairly rich framework to capture relevant
interaction characteristics. In particular path-based features are capable of modeling detailed
long-term memory in human-machine interactions
while decomposable features can model short-term dependencies.
As shown in \Cref{thm:complexity}, for such settings,
the solution to (\ref{eqn:step1}) and (\ref{eqn:step2}) require steps
with only polynomial space and time complexity.
\ifappendix \else Please see \cite{preprint} for detailed proof.\fi


\section{AREA Convergence}\label{sec:area}
As discussed in previous sections, the AREA Algorithm is aimed at
achieving high rewards through consistency in the estimated human model
and optimized machine policy.
In this section, we characterize AREA's convergence properties.

The convergence of the algorithm can be guaranteed in two extremal cases.
Clearly if the set of feature functions
$\mathcal{F}$ and $\mathcal{G}$ is rich enough
that the true human behaviour is recovered as the solution to (\ref{eqn:step1}),
then AREA converges\ifappendix--see \Cref{sec:sufficient-stats} for details\fi.
Or, if the feature set is sufficient to guarantee that the actual human behavior
along the `paths' that are impactive to reward is perfectly captured by the estimated
human model, then AREA also converges in one iteration.
\ifappendix
\begin{theorem}\label{thm:path-reward}
Suppose the reward function is a path-based, i.e.,
$r(h^T, m^T) = \mathbf{1}_{\{(h^T,m^T )=(\bar{h}^{r, T}, \bar{m}^{r, T})\}}$
and included in the feature set
$\mathcal{F}$ and the initial machine's
policy $\hat{Q}^{(0)}$ has full support.
Consider a modified version of human estimation problem
which includes the following additional features.
For each path-based feature, i.e., $i\in \mathcal{F}_p$,
we include $T - 1$ auxiliary features $\mathcal{F}_p^i$ as follows:
$$
\mathcal{F}_{p}^i =\{f^{i,t}(h^T, m^T)~|~f^{i,t}(h^T, m^T) = \mathbf{1}_{\{(h^t,m^t)
= (\bar{h}^{i,t}, \bar{m}^{i,t})\}}, ~\mbox{for}~ t= 1,\ldots,T\},
$$
ensuring matching of full-length and prefixes for the path based features.
For the modified set of equality features
$\mathcal{F} = \mathcal{F}_{d} \cup (\bigcup\limits_{i\in\mathcal{F}_{p}}\mathcal{F}_{p}^i)$.
and an arbitrary set of inequality features $\mathcal{G}$
AREA converges in one iteration.
\end{theorem}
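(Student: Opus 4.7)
My plan is to show that (i) the prefix moment constraints force $\hat{P}^{(0)}$ to agree with $P^*$ on every causal conditional encountered along each path-based feature path, and (ii) the machine policy returned by \Cref{thm:solution-machine} with a path-based reward depends on its input human model \emph{only} through those conditionals along the reward path $(\bar h^{r,T},\bar m^{r,T})$. Together these yield $\hat Q^{(1)} = m^*(m(P^*)) =: Q^*$, the policy that would be optimal were $P^*$ known exactly; a second pass of the same argument then gives $\hat Q^{(2)} = \hat Q^{(1)}$, proving the one-iteration fixed point.

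For (i), starting from the moment constraint associated with the prefix feature $f^{i,t}(h^T,m^T) = \mathbf{1}_{\{(h^t,m^t)=(\bar h^{i,t},\bar m^{i,t})\}}$, one has
\[
\hat P^{(0)}(\bar h^{i,t}\|\bar m^{i,t})\,\hat Q^{(0)}(\bar m^{i,t}\|\bar h^{i,t}) = P^*(\bar h^{i,t}\|\bar m^{i,t})\,\hat Q^{(0)}(\bar m^{i,t}\|\bar h^{i,t}).
\]
Since $\hat Q^{(0)}$ has full support, cancelling the $\hat Q^{(0)}$ factors yields $\hat P^{(0)}(\bar h^{i,t}\|\bar m^{i,t}) = P^*(\bar h^{i,t}\|\bar m^{i,t})$ for every $t$, and dividing the $t$-identity by the $(t{-}1)$-identity then matches each causal conditional along the path. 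Specializing to $i=r$ produces the identities needed in step (ii).

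For (ii), I would unroll the backward recursion in \Cref{thm:solution-machine}. At $t=T$, the path-based reward makes $\sum_{h_T}\hat P(h_T\mid h^{T-1},m^T)\,r(h^T,m^T)$ equal to $\hat P(\bar h^{r,T}_T\mid \bar h^{r,T-1},\bar m^{r,T})$ on the reward prefix and $0$ elsewhere, so $Y_\gamma(h^{T-1},m^{T-1})$ equals $|\mathcal{M}|$ off the reward prefix and a function of one on-path value of $\hat P$ on it. A downward induction on $t$ then shows that $\log Y_\gamma(h^t,m^t)$ is a $\hat P$-independent constant off the reward prefix and depends on $\hat P$ only through $\{\hat P(\bar h^{r,s}_s\mid\bar h^{r,s-1},\bar m^{r,s}): s\ge t\}$ on it. By step (i) these coincide with the corresponding conditionals of $P^*$, so $\hat Q^{(1)} = Q^*$. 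Because every $Y_\gamma$ factor is a strict exponential, $\hat Q^{(1)}$ has full support, so rerunning (i) with $\hat Q^{(1)}$ yields $\hat P^{(1)}$ matching $P^*$ along the reward path, and rerunning (ii) gives $\hat Q^{(2)} = Q^* = \hat Q^{(1)}$.

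The main obstacle will be making the backward induction in (ii) airtight: one has to verify carefully that histories branching off the reward prefix contribute only a $\hat P$-independent constant to $\log Y_\gamma$ at each level (in fact $(T{-}t)\log|\mathcal{M}|$), so that the on-path expectation at each stage depends on $\hat P$ solely at the single on-path transition. Once this localization is established, the decomposable features $\mathcal{F}_d$ and inequality features $\mathcal{G}$ cannot influence $\hat Q^{(1)}$ at all, and the combination of (i) and (ii) delivers convergence in one iteration.
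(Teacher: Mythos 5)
Your proposal is correct and follows essentially the same route as the paper's proof: matching the prefix moments and cancelling the full-support $\hat{Q}^{(0)}$ to pin down the on-path causal conditionals of $\hat{P}^{(0)}$ to those of $P^*$, then observing that the backward $Y_\gamma$ recursion is a $\hat{P}$-independent constant off the reward path and depends on $\hat{P}$ only through those on-path conditionals, so $\hat{Q}^{(n)}$ is unchanged for all $n\ge 1$. Your explicit remark that $\hat{Q}^{(1)}$ retains full support (each $Y_\gamma$ being a strict exponential), which licenses repeating the cancellation at the next iteration, is a small but worthwhile detail that the paper's proof leaves implicit.
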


Please see \cref{sec:proof-thm:path-reward} for proof.
Note that by following a similar argument as in the proof of \Cref{thm:complexity},
one can show that features included in $\mathcal{F}_p^i$ will not undermine the
polynomial complexity. One just needs to keep track of the prefix overlap between
the current conditioning sequence and the support of each path-based feature, in order
to compute the associated $Z_{\bm\lambda}$.

\else
See \cite{preprint}
for details.
\fi

For more general cases, the convergence of AREA algorithm is subtle. Note that the human estimation problem (\ref{eqn:step1}) depends on the machine policy $Q(m^T\|h^T)$ used.
Thus given $Q(m^T \| h^T)$ at
the current iteration one can determine the associated
model for human behavior $h^*(Q, \mathbf{c}_f, \mathbf{c}_g)$
which may in turn change the optimal
machine policy. This makes
the analysis of convergence difficult.
In order to facilitate the convergence, we propose introducing
an additional inequality
constraint to the human estimation problem (\ref{eqn:step1}).

During the $n$th iteration, given the previously obtained $\hat{P}^{(n-1)}$ and $\hat{Q}^{(n)}$
we shall include the following {\em step-dependent} inequality constraint in $\mathcal{G}$.
Let
$
g^{0,(n)}(h^T, m^T) = -\log \hat{Q}^{(n)}(m^T\|h^T) + \gamma r(h^T, m^T),
$
and let $c_g^{0,(n)} = E_{\hat{P}^{(n-1)}\hat{Q}^{(n)}}[g^{0, (n)}(H^T, M^T)]$,
then on AREA iteration $n$ we require that
$
E_{P\hat{Q}^{(n)}}[g^{0, (n)}(H^T, M^T)] \geq c_g^{0,(n)}.
$

Let us define a sequence $\{L^{(n)}\}$ of entropy regularized expected rewards
across iterations, i.e.,
$
L^{(n)} := E_{\hat{P}^{(n)}\hat{Q}^{(n)}}[g^{0, (n)}(H^T, M^T)].
$
Then we have the following result.
{
\begin{theorem}\label{thm:convergence}
Consider the AREA algorithm optimizing a human-machine interactive process
with a fixed sets of equality/inequality constraints ${\cal F}$ and ${\cal G}$.
Suppose ${\cal G}$ is modified to ${\cal G}^{(n)}$
by adding the additional step-dependent inequality constraint
$E_{P\hat{Q}^{(n)}}[g^{0,(n)}(H^T, M^T)] \ge c_g^{0,(n)}$.
Then the modified AREA iterations generate a bounded nondecreasing
sequence $\{L^{(n)}\}$,
which must converge.
\end{theorem}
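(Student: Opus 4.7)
The strategy is to show that $\{L^{(n)}\}$ is both bounded above and nondecreasing, so that convergence follows from the monotone convergence theorem. The key observation is that
\begin{equation*}
L^{(n)} = \mathbb{H}_{\hat{P}^{(n)}\hat{Q}^{(n)}}(M^T\|H^T) + \gamma E_{\hat{P}^{(n)}\hat{Q}^{(n)}}[r(H^T,M^T)],
\end{equation*}
i.e., $L^{(n)}$ is nothing but the machine optimization objective (\ref{eqn:step2}) evaluated at the current iterate $(\hat{P}^{(n)},\hat{Q}^{(n)})$. Boundedness is then immediate: by the chain rule (\ref{eq:causalentropy}) each conditional entropy is at most $\log|\mathcal{M}|$, so $\mathbb{H}_{\hat{P}^{(n)}\hat{Q}^{(n)}}(M^T\|H^T)\le T\log|\mathcal{M}|$, and combined with boundedness of $r$ on the finite space $\mathcal{H}^T\times\mathcal{M}^T$ this yields a uniform upper bound on $L^{(n)}$.

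For monotonicity, I would split the transition $L^{(n)}\mapsto L^{(n+1)}$ into the two AREA half-steps. First, the machine optimization half-step produces $\hat{Q}^{(n+1)}$ as the maximizer of (\ref{eqn:step2}) with $\hat{P}^{(n)}$ fixed; since $\hat{Q}^{(n)}$ is itself feasible, this gives
\begin{equation*}
\mathbb{H}_{\hat{P}^{(n)}\hat{Q}^{(n+1)}}(M^T\|H^T) + \gamma E_{\hat{P}^{(n)}\hat{Q}^{(n+1)}}[r] \;\ge\; L^{(n)},
\end{equation*}
and the left-hand side is precisely $E_{\hat{P}^{(n)}\hat{Q}^{(n+1)}}[g^{0,(n+1)}] = c_g^{0,(n+1)}$ by the definition of $g^{0,(n+1)}$. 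Second, the subsequent constrained human estimation at iteration $n+1$ forces $\hat{P}^{(n+1)}$ to satisfy the step-dependent inequality, i.e.\
\begin{equation*}
L^{(n+1)} = E_{\hat{P}^{(n+1)}\hat{Q}^{(n+1)}}[g^{0,(n+1)}] \;\ge\; c_g^{0,(n+1)}.
\end{equation*}
Chaining the two inequalities yields $L^{(n+1)} \ge c_g^{0,(n+1)} \ge L^{(n)}$, completing the monotonicity argument.

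\textbf{Where the work lies.} The proof is short once the right quantity to track is identified; the substantive content is the design of the added constraint $g^{0,(n)}$ rather than any delicate estimate. A naive AREA alternates between two different objectives---causal entropy of $H^T$ on the human step versus entropy-regularized reward on the machine step---and nothing prevents the human update from destroying the machine's accumulated progress. Encoding the machine objective under $\hat{Q}^{(n)}$ as the feature $g^{0,(n)}$ and anchoring its expectation from below by the previous iterate's value is precisely what forces the two half-steps to cooperate. I would also verify feasibility in passing: the previous human model $\hat{P}^{(n)}$ satisfies the new step-dependent constraint at iteration $n+1$ with equality, so whenever the remaining moment constraints are compatible the feasible region is nonempty and $\hat{P}^{(n+1)}$ is well defined.
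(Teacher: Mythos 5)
Your proof is correct and is essentially the paper's own argument: the chain $L^{(n+1)} \ge c_g^{0,(n+1)} \ge L^{(n)}$, with the first inequality coming from the added step-dependent constraint in the human estimation and the second from the optimality of the machine step against the feasible point $\hat{Q}^{(n)}$, is exactly the paper's $L(\mathbf{Y}_\gamma^{(n+1)}) \ge c(\mathbf{Y}_\gamma^{(n+1)}|\mathbf{Y}_\gamma^{(n)}) \ge c(\mathbf{Y}_\gamma^{(n)}|\mathbf{Y}_\gamma^{(n)}) = L(\mathbf{Y}_\gamma^{(n)})$, only stated directly in terms of $(\hat{P},\hat{Q})$ rather than the $\mathbf{Y}_\gamma$ reparameterization (which the paper needs only for the subsequent fixed-point result). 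Your boundedness argument and the feasibility remark (that $\hat{P}^{(n)}$ meets the new constraint with equality) likewise match the paper's reasoning.
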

}

\ifappendix
The proof is included in \cref{proof:thm-convergence}.
\fi

\emph{Remark:}
Note that when the conditions in Theorem \ref{thm:complexity} holds true,
then $\hat{Q}^{(n)}$ takes independent actions once the path deviates from the support
of all path-based feature functions.
Thus the introduced step-dependent feature function can be written as:
$
g^{0,(n)}(h^T, m^T) = -\sum_{t = 1}^T \log \hat{Q}_t^{(n)}(m_t)
+ \sum_{i \in \mathcal{F}_{p} \cup \mathcal{G}_{p}}
\left( \sum_{t = 1}^T \log \hat{Q}_t^{(n)}(\bar{m}^i_t) - \log \hat{Q}^{(n)}(\bar{m}^{i,T}\|\bar{h}^{i,T}) \right)\mathbf{1}_{\{h^T = \bar{h}^{i, T}, m^T = \bar{m}^{i,T}\}},
$
which is still a weighted sum of path-based functions and decomposable functions.
This in turn means that the added constraint is such that iteration steps will
still have the polynomial complexity shown in \Cref{thm:complexity}.

$L^{(n)}$ can be regarded
as a measure of the performance of the associated machine policy
$\hat{Q}^{(n)}$. Indeed if $\hat{Q}^{(n)}$ were a fixed point
of AREA recursion, then the optimal objective function of (\ref{eqn:step2}) would
have converged to $L^{(n)}$.
Also note that by further assuming that the feature and reward functions are
decomposable, we can characterize the performance for
the converging sequence $\{L^{(n)}\}$--see \ifappendix\Cref{sec:decomposable}\else\cite{preprint}\fi.

\section{Evaluation}\label{sec:evaluation}
%
In this section, we conduct a preliminary numerical evaluation of AREA using
synthetic human-machine interaction data based on the
Leaky Competing Accumulator (LCA) model, see \cite{UsM01}.
This non-linear noisy model is known to capture common human decision-making processes
driven by external stimuli.

\subsection{Robustness against sampling noise}
\label{sec:evaluation-robust}
Throughout the paper we have assumed no sampling noise when
estimating the moments of features. In practice the available
data may be limited or costly and thus noisy estimates are inevitable.
The robustness of maximum entropy inference
against such noise is mathematically characterized in Theorem 6 of \cite{ZBD13}.
In this section, we will explore the robustness of the AREA algorithm
to noise when the number of samples per iteration are limited.
\ifappendix The detailed set-up of the LCA model
for human-machine interactions is included in \Cref{sec:evaluation-setup}
\else See \cite{preprint} for detailed set-up\fi.

We consider a setting where $T = 30$, $\mathcal{H} = \{1, \ldots, 6\}$,
$\mathcal{M} = \{ 1, \ldots, 6\}$ and $\gamma = 2$.
The reward function is $r(h^T, m^T) = \sum_{t = 1}^T r_t(h_t, m_t)$, where
$r_t(h_t, m_t) = \mathbf{1}_{\{t~\textrm{mod}~5 = 0\}}\mathbf{1}_{\{h_t = 1\}}
+\mathbf{1}_{\{t~\textrm{mod}~5 \ne 0\}}\mathbf{1}_{\{h_t \ne 1\}}$, i.e.,
we are looking to funnel the human behavior to choosing
1 only at $t = 5, 10,\cdots,30.$
The features include the reward function itself, together with the number of times
human follows the machine
$f^1(h^T, m^T) = \sum_{t = 1}^T \mathbf{1}_{\{h_t = m_t\}}$,
and a `weighted' number of times of following occurs emphasizing
later times, i.e., $t = 5, 10, \cdots,30$, 
$f^2(h^T, m^T) = \sum_{t = 1}^T f^2_t(h_t, m_t)$, where
$f^2_t(h_t, m_t) = (\mathbf{1}_{\{t~\textrm{mod}~5 = 0\}} + 0.25 \mathbf{1}_{\{t~\textrm{mod}~5 \ne 0\}})\mathbf{1}_{\{h_t = m_t\}}$.
The challenge here is for the machine to learn to drive
human (nonlinear model) away from 1 and back to 1 periodically.

The results in Fig.~\ref{fig:convergence-samplesize-decomposable}
exhibit the convergence of the regularized reward function $L$ vs
the number of AREA steps, when different numbers of samples are used
to estimate the moments in AREA's inference step.
Clearly, AREA converges almost immediately although
it exhibits variations when small samples ($\leq 100$) are used.
\begin{figure*}[!tbp]
\centering
\subfigure[Convergence of AREA vs. sample size of data collection]{
\includegraphics[height = 2in]{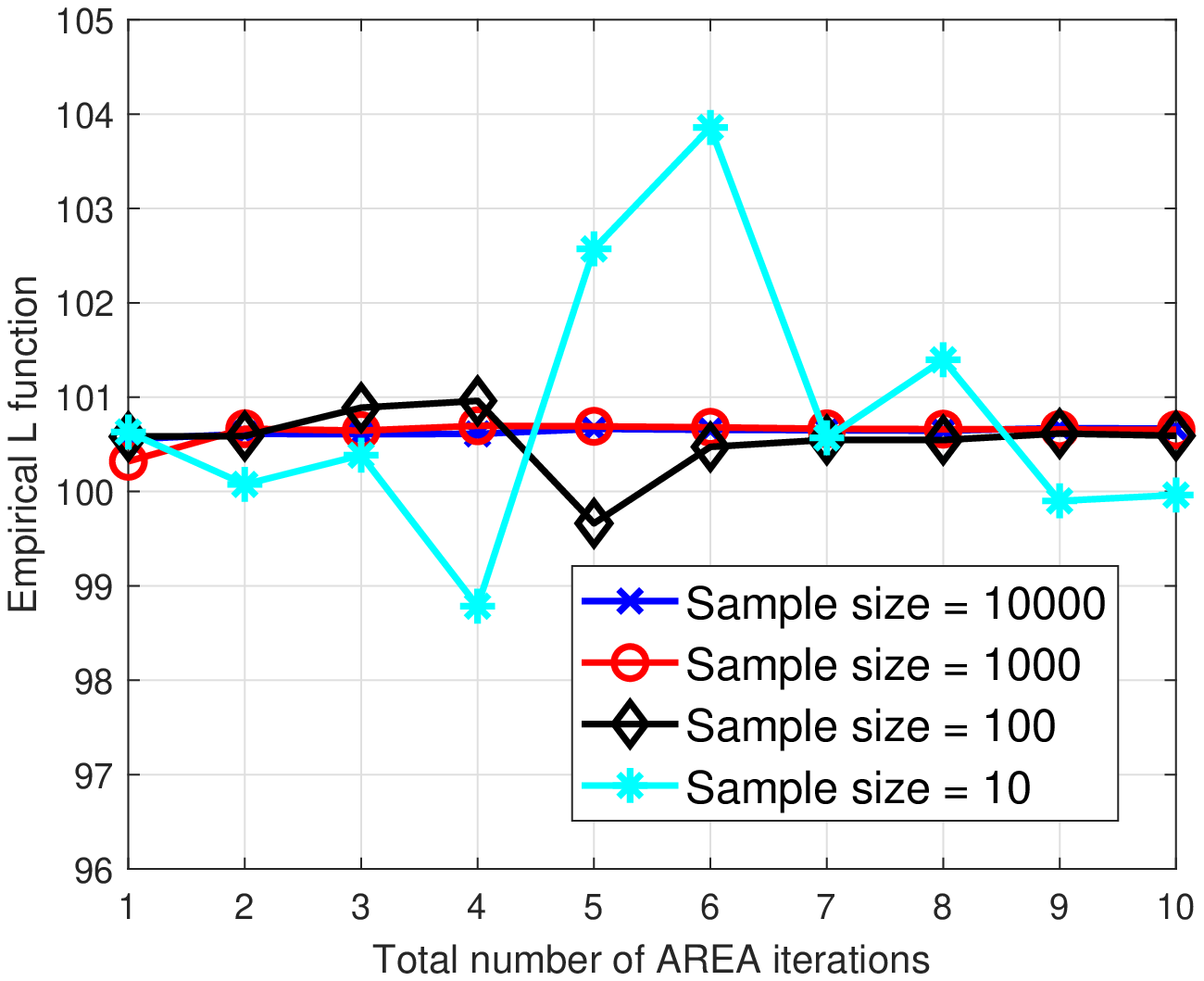}
\label{fig:convergence-samplesize-decomposable}
}
\subfigure[Average reward vs. number of samples observed]{
\includegraphics[height = 2in]{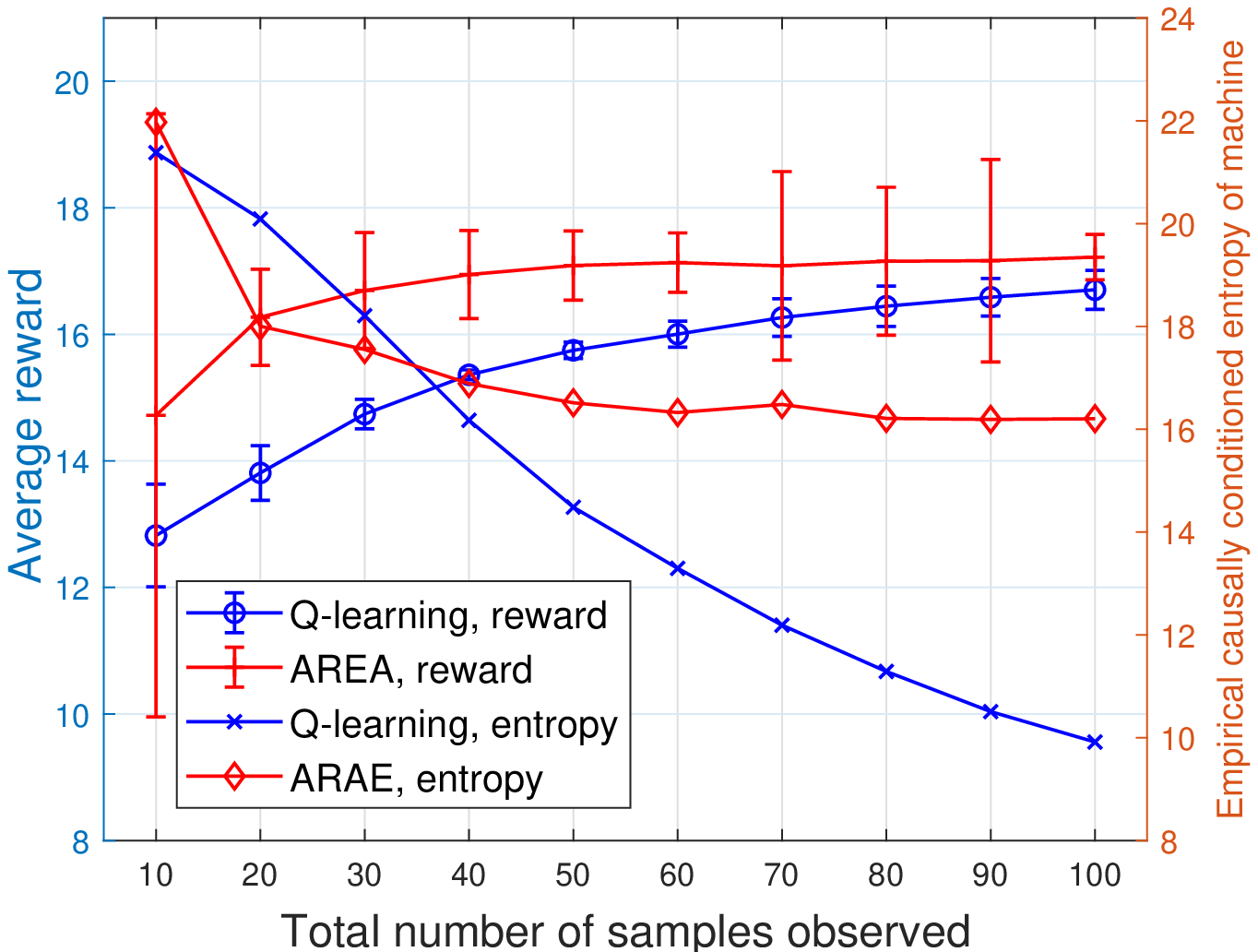}
\label{fig:reward-vs-Qlearning}
}
\vspace{-2em}
\end{figure*}
%
\subsection{Performance in average reward and causally conditioned entropy}
Next we compare the performance of AREA to a simple Q-learning algorithm \cite{KLM96}
with finite memory. We shall compare the attained reward
and empirical causally conditioned entropy of the optimized machine
policies.
In this setting the humans' actions are viewed as the environment. Thus,
instead of `scoring' each action based on the most recent humans' response,
Q-learning scores each action based on the most recent $\tau$ interactions,
together with $t$ to accommodate the transient nature of the process, i.e., it
keeps track of
$Q( (h_{t- \tau + 1}, \ldots, h_t, m_{t- \tau + 1}, \ldots, m_t), t+1 , m_{t+1})$.

At time $t$, the machine chooses an action using a softmax of the $Q$ function
given the latest interaction history
$(h_{t- \tau + 1}, \ldots, h_t, m_{t- \tau + 1}, \ldots, m_t)$
and $t + 1$, and then updates the $Q$ function accordingly. We shrink the state
space to $|\mathcal{H}| = |\mathcal{M}|=3$ and $T = 20$ so the $Q$ function fits
in the memory and also change $\gamma$ to 4 to put more emphasis on reward.
We shall consider the same rewards and features
as in \Cref{sec:evaluation-robust}.
of AREA.
We will let both algorithms complete 100 `interactions' with our synthetic human model.
For AREA, we collect 10 human-machine interaction samples per AREA iteration,
and run 10 iterations in total.
For Q-learning we also allow a total of 100 interactions. We set $\tau = 1$ since
further experiments show that greater $\tau$ impairs the performance
of Q-learning for it requires more samples to learn.
The detailed setup for Q-learning can be found in \ifappendix\Cref{sec:evaluation-setup}\else \cite{preprint}\fi.
We kept track of the average reward obtained,
estimated causally conditioned entropy of machine obtained for both algorithms after
integrating the first $n$ samples.
We run the simulation 5 rounds to obtain the average,
and the results,
together with the 90\% confidence intervals
are shown in \Cref{fig:reward-vs-Qlearning}.
These representative results suggest that typically
AREA algorithm is very efficient, delivering
higher rewards than Q-learning while
at the same time realizing (as desired) higher machine
policy entropy with a very limited number of samples.

\section{Conclusions}
The paper proposes a general data-driven framework to optimize possibly complex
human-machine interaction processes. At the core is the AREA algorithm
which jointly solves the problem of
estimating a model for human behaviour and optimizing the machine
policy based on a constrained maximum entropy estimation.
An underlying goal is to enable the integration of domain-specific knowledge
regarding relevant interaction characteristics or known
human biases by matching the observed moments of feature functions.
The paper details the formal optimization problems and solutions
underlying the AREA algorithm and explores a modification to significantly
reduce the complexity when the feature and reward functions
are path-based and/or decomposable.
The setting considered is fairly general, allowing one to incorporate
human-machine interactions with long memory.
The characterization of AREA is provided in terms of
($i$) its space and time complexity, and
($ii$) its convergence in various settings.
A simple numerical evaluation is used to demonstrate
the robustness of AREA to noise when sample sizes are limited,
along with a performance comparison to Q-learning.
The analysis and simple validation suggest that
AREA may achieve most of its gains in one iteration particularly
if sufficient domain specific features/rewards are properly integrated.

\bibliographystyle{unsrt}
\bibliography{./bib/entropy_MDP,./bib/gdv,./bib/interactive,./bib/optim,./bib/koyejo,./bib/art}

\ifappendix
\appendix

\section{Solution to dual of Problem (\ref{eqn:step1})}\label{sec:solution-estimation-dual}
Theorem 4 in \cite{ZBD13} shows the strong duality of Problem (\ref{eqn:step1}).
Therefore, the optimal dual induces the optimal primal solution.
If we find a $\bm{\lambda}^*$ minimizing (\ref{eq:estimate-dual}),
the estimated human model $\hat{P}$ is given by $P_{\bm\lambda^*}$.
The dual problem can be solved via a subgradient-based algorithm.
In particular, if we use an
adaptive learning rate $\eta^{(n)}\in \mathbb{R}^+$, the dual variable should be updated by
\begin{align}
&\bm{\lambda}_f^{(n+1)}\leftarrow \bm{\lambda}_f^{(n)} - \eta^{(n)}(E_{P_{\bm\lambda}Q}[\mathbf{f}(H^T,M^T)]-\mathbf{c}_f),\nonumber\\
&\bm{\lambda}_g^{(n+1)}\leftarrow \max\{0, \bm{\lambda}_g^{(n)} - \eta^{(n)} (E_{P_{\bm\lambda}Q}[\mathbf{g}(H^T,M^T)]-\mathbf{c}_g)\}, \label{eq:dual-update}
\end{align}
where $\mathbf{c}_f = E_{P^*Q}[\mathbf{f}(H^T,M^T)]$ are the moments of the feature functions
associated with the equality constraints obtained from the human-machine interaction data in
the inference step, and the gradients are computed using
the recursive form defined in \Cref{thm:solution}.
Then the sequence $\{\boldsymbol{\lambda}^{(n)}\}$ converges to the optimal dual $\boldsymbol{\lambda}^*$.

\section{Proof of Theorem \ref{thm:solution-machine}}
\label{sec:proof-thm:solution-machine}
The machine optimization problem (\ref{eqn:step2}) can be shown to be concave in $Q$ thus
one can directly solve it via first-order optimality conditions.
Considering the variables to be $\{Q(m_t|h^{t-1}, m^{t-1}),
 t= 1,2,\ldots T, h^{t-1} \in \mathcal{H}^{t-1}, m^t \in \mathcal{M}^t\}$,
the Lagrangian associated with Problem (\ref{eqn:step2}) can be written as:
\begin{align*}
\Lambda(Q, \beta) = & \mathbb{H}_{\hat{P}Q}(M^T\|H^T) + \gamma E_{\hat{P}Q}[r(H^T, M^T)] \\
& - \sum_{\substack{1\le t\le T \\ h^{t - 1} \in \mathcal{H}^{t - 1} \\ m^{t-1} \in \mathcal{M}^{t-1}}} \beta(h^{t-1}, m^{t-1})\left(1 - \sum_{m_t}Q(m_t | h^{t-1}, m^{t-1})\right),
\end{align*}
where $\beta(h^{t-1}, m^{t-1})$ for $t=1,\ldots,T$ denote dual variables associated with
the respective normalization constraints $\sum_{m_t}Q(m_t | h^{t-1}, m^{t-1}) = 1$.
By differentiating the Lagrangian we have
\begin{eqnarray*}
\lefteqn{\nabla_{Q(m_t|h^{t-1}, m^{t-1})}\Lambda(Q, \beta) = }\\
& & \beta(h^{t-1}, m^{t-1}) + \hat{P}Q(h^{t-1}, m^{t-1})\left(-\log Q(m_t|h^{t-1}, m^{t-1}) - 1 \right.\\
& & + \left.\mathbb{H}_{\hat{P}Q}(M^T\|H^T | h^{t-1}, m^t) + \gamma E_{\hat{P}Q}[r(H^T, M^T)|h^{t-1}, m^t] \right),
\end{eqnarray*}
where $\mathbb{H}_{\hat{P}Q}(M^T\|H^T | h^{t-1}, m^t)$ is the further conditioned, causally conditioned
entropy, defined as:
$$
\mathbb{H}_{\hat{P}Q}(M^T\|H^T | h^{t-1}, m^t) := E_{\hat{P}Q}[-\log Q(M^T\|H^T)~|~H^{t-1} = h^{t-1}, M^{t-1} = m^{t-1}].
$$
After plugging $Y_\gamma$ defined recursively in \Cref{thm:solution-machine},
and setting $\beta(h^{t-1}, m^{t-1}) = \hat{P}Q(h^{t-1}, m^{t-1}) + \log Y_\gamma(h^{t-1}, m^{t-1})\hat{P}Q(h^{t-1}, m^{t-1})$,
we can show that $\nabla_{Q(m_t|h^{t-1}, m^{t-1})}\Lambda(Q, \beta) = 0.$ Thus the optimal solution is achieved.

\section{Proof of Theorem \ref{thm:complexity}}
\label{sec:proof-thm:complexity}
Before proving \Cref{thm:complexity}, let us first
consider a simpler case where only decomposable features are
included in Problem (\ref{eqn:step1}).
The following corollary to \Cref{thm:solution} characterizes a case where the complexity of the solution
is polynomial in $T$.

\begin{lemma}\label{cor:decomposable}
Suppose the machine's policy is given by a (possibly time-inhomogeneous) one-step Markov process, i.e.,
$Q(m_t |h^{t-1}, m^{t - 1}) = Q(m_t|h_{t-1}, m_{t-1}),~\forall t, m^{t-1}, h^{t-1}$, and all
feature functions are decomposable, i.e.,
$\mathbf{f}(h^T, m^T) = \sum_{t = 1}^T \mathbf{f}_{t}(h_t, m_t)$,
and
$\mathbf{g}(h^T, m^T) = \sum_{t = 1}^T \mathbf{g}_{t}(h_t, m_t)$.
Then the solution to the human estimation problem is given, by the following procedure
over a given dual $\boldsymbol{\lambda} = (\boldsymbol{\lambda}_f, \boldsymbol{\lambda}_g)$:
\begin{align*}
& && Z_{\boldsymbol{\lambda}}(h_t|m_t) = \left\{
\begin{array}{ll}
e^{(\boldsymbol{\lambda}_f)^T \mathbf{f}_t(h_t, m_t) + (\boldsymbol{\lambda}_g)^T \mathbf{g}_t(h_t, m_t)+ \sum_{m_{t+1}}Q({m}_{t+1}|h_{t}, m_{t})\log Z_{\boldsymbol{\lambda}}(m_{t+1})} & t < T \\
e^{(\boldsymbol{\lambda}_f)^T \mathbf{f}_T(h_T, m_T) + (\boldsymbol{\lambda}_g)^T \mathbf{g}_T(h_T, m_T)} & t = T
\end{array}
\right.,
\end{align*}
where
$$
Z_{\boldsymbol{\lambda}}(m_t) = \sum_{h_{t}} Z_{\boldsymbol{\lambda}}(h_t|m_t),~\textrm{and}~P_{\bm{\lambda}}(h_t | m_t) = \frac{Z_{\boldsymbol{\lambda}}(h_t|m_t)}{Z_{\boldsymbol{\lambda}}(m_t)}.
$$
Moreover,
both the space and time complexity of establishing the distributional
model is $O(T|\mathcal{H}||\mathcal{M}|)$.
The complexity of carrying out each dual update is
$O(T(|\mathcal{F}|+ |\mathcal{G}|)|\mathcal{H}|^2|\mathcal{M}|^2)$
\end{lemma}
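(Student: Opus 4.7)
My plan is to derive the closed form by specializing the general backward recursion of \Cref{thm:solution} to the decomposable/Markov setting, then use backward induction on $t$ to prove that $Z_{\boldsymbol{\lambda}}(h_t \mid h^{t-1}, m^t)$ factors into a \emph{prefix} term depending only on $(h^{t-1}, m^{t-1})$ and a \emph{local} term $\tilde{Z}_{\boldsymbol{\lambda}}(h_t \mid m_t)$ depending only on $(h_t, m_t)$. Since in $P_{\boldsymbol{\lambda}}(h_t \mid h^{t-1}, m^t) = Z_{\boldsymbol{\lambda}}(h_t \mid h^{t-1}, m^t)/Z_{\boldsymbol{\lambda}}(h^{t-1}, m^t)$ the common prefix factor cancels, the conditional reduces to $\tilde{Z}_{\boldsymbol{\lambda}}(h_t\mid m_t)/\tilde{Z}_{\boldsymbol{\lambda}}(m_t)$, which is exactly the stated solution.

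For the induction, the base case $t = T$ is immediate: decomposability yields $\lambda_f^T\mathbf{f}(h^T,m^T) + \lambda_g^T\mathbf{g}(h^T,m^T) = \sum_{s=1}^{T-1}[\lambda_f^T\mathbf{f}_s(h_s,m_s) + \lambda_g^T\mathbf{g}_s(h_s,m_s)] + [\lambda_f^T\mathbf{f}_T(h_T,m_T) + \lambda_g^T\mathbf{g}_T(h_T,m_T)]$, and exponentiating splits the prefix from the local piece, which we call $\tilde{Z}_{\boldsymbol{\lambda}}(h_T\mid m_T)$. For the inductive step, assume $Z_{\boldsymbol{\lambda}}(h_{t+1}\mid h^t, m^{t+1}) = \Phi(h^t, m^t)\,\tilde{Z}_{\boldsymbol{\lambda}}(h_{t+1}\mid m_{t+1})$. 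Summing over $h_{t+1}$ gives $Z_{\boldsymbol{\lambda}}(h^t, m^{t+1}) = \Phi(h^t, m^t)\,\tilde{Z}_{\boldsymbol{\lambda}}(m_{t+1})$, so $\log Z_{\boldsymbol{\lambda}}(h^t, m^{t+1}) = \log \Phi(h^t, m^t) + \log \tilde{Z}_{\boldsymbol{\lambda}}(m_{t+1})$. Substituting into the recursion and using the Markov property $Q(m_{t+1}\mid h^t, m^t) = Q(m_{t+1}\mid h_t, m_t)$ yields
\begin{equation*}
Z_{\boldsymbol{\lambda}}(h_t\mid h^{t-1}, m^t) = \Phi(h^t, m^t)\cdot \exp\!\Bigl(\sum_{m_{t+1}} Q(m_{t+1}\mid h_t, m_t)\log \tilde{Z}_{\boldsymbol{\lambda}}(m_{t+1})\Bigr).
\end{equation*}
Peeling off the decomposable feature contribution at step $t$ from $\Phi(h^t, m^t)$ gives a new prefix $\Phi'(h^{t-1}, m^{t-1})$ times the local factor $\tilde{Z}_{\boldsymbol{\lambda}}(h_t\mid m_t) = \exp\!\bigl(\lambda_f^T\mathbf{f}_t(h_t,m_t) + \lambda_g^T\mathbf{g}_t(h_t,m_t) + \sum_{m_{t+1}} Q(m_{t+1}\mid h_t, m_t)\log \tilde{Z}_{\boldsymbol{\lambda}}(m_{t+1})\bigr)$, closing the induction and matching the formula in the statement.

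For the complexity bounds, I would observe that the table $\{\tilde{Z}_{\boldsymbol{\lambda}}(h_t\mid m_t)\}$ has $O(T|\mathcal{H}||\mathcal{M}|)$ entries, so storing the model is $O(T|\mathcal{H}||\mathcal{M}|)$, and filling it in via the recursion (after one pass to compute $\tilde{Z}_{\boldsymbol{\lambda}}(m_{t+1})$) is of the same order treating each entry as unit cost. For each subgradient step in \eqref{eq:dual-update}, computing $E_{P_{\boldsymbol{\lambda}}Q}[\mathbf{f}(H^T, M^T)]$ and $E_{P_{\boldsymbol{\lambda}}Q}[\mathbf{g}(H^T, M^T)]$ reduces, by decomposability, to summing $E_{P_{\boldsymbol{\lambda}}Q}[\mathbf{f}_t(H_t, M_t)]$ over $t$; because $P_{\boldsymbol{\lambda}}(h_t\mid h^{t-1}, m^t) = P_{\boldsymbol{\lambda}}(h_t\mid m_t)$ and $Q$ is one-step Markov, the joint $(H_t, M_t)$ marginal is obtained by propagating a distribution on $\mathcal{H}\times\mathcal{M}$ forward with an $|\mathcal{H}||\mathcal{M}|\times|\mathcal{H}||\mathcal{M}|$ transition kernel, which costs $O(|\mathcal{H}|^2|\mathcal{M}|^2)$ per step; summed over $t$ and features yields the claimed $O(T(|\mathcal{F}|+|\mathcal{G}|)|\mathcal{H}|^2|\mathcal{M}|^2)$.

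The main obstacle is the inductive step: one must show that the Markov property of $Q$ is exactly strong enough to allow the prefix factor $\Phi(h^t, m^t)$ to be pulled out of the exponential $\exp(\sum_{m_{t+1}} Q(m_{t+1}\mid h^t, m^t)\log Z_{\boldsymbol{\lambda}}(h^t, m^{t+1}))$ without leaving residual dependence on $h^{t-1}, m^{t-1}$ inside the local factor. Without the Markov assumption on $Q$, the weights $Q(m_{t+1}\mid h^t, m^t)$ would couple the full history to $m_{t+1}$, and the factorization would fail; it is precisely this point that makes the restriction on $Q$ in the hypothesis necessary.
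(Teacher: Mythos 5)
Your proposal is correct and follows essentially the same route as the paper's proof: a backward induction showing that $Z_{\boldsymbol{\lambda}}(h_t|h^{t-1},m^t)$ factors into a history-dependent prefix (which cancels in the normalization) times the local factor given in the statement, using $\sum_{m_{t+1}}Q(m_{t+1}|h^t,m^t)=1$ to extract the prefix and the one-step Markov property of $Q$ to keep the remaining factor local, followed by the same forward-propagation argument for the per-feature expectation cost in the dual update. The only cosmetic difference is that you propagate a joint marginal on $\mathcal{H}\times\mathcal{M}$ where the paper propagates the marginal of $M_t$ alone; both give the stated $O(T(|\mathcal{F}|+|\mathcal{G}|)|\mathcal{H}|^2|\mathcal{M}|^2)$ bound.
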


\begin{proof}
We'll prove that under such assumption, $Z_{\boldsymbol{\lambda}}(h_t|h^{t-1}, m^{t})$ in \cref{thm:solution} is given by:
\begin{align*}
Z_{\boldsymbol{\lambda}}(h_t|h^{t-1}, m^{t}) = & e^{(\boldsymbol{\lambda}_f)^T\sum_{\tau = 1}^{t - 1}\mathbf{f}_\tau(h_\tau, m_\tau) +
(\boldsymbol{\lambda}_g)^T\sum_{\tau = 1}^{t - 1}\mathbf{g}_\tau(h_\tau, m_\tau)} \\
&\times e^{(\boldsymbol{\lambda}_f)^T \mathbf{f}_t(h_t, m_t) + (\boldsymbol{\lambda}_g)^T \mathbf{g}_t(h_t, m_t) + \sum_{m_{t+1}}Q({m}_{t+1}|h_{t}, m_{t})\log Z_{\boldsymbol{\lambda}}(m_{t+1})}
\end{align*}
where $Z_{\boldsymbol{\lambda}}(m_{t+1})$ is given as in Lemma \ref{cor:decomposable}.

The above equation implies that:
\begin{eqnarray*}
{P}_{\boldsymbol{\lambda}}(h_t|h^{t-1}, m^{t}) & = & \frac{Z_{\boldsymbol{\lambda}}(h_t|h^{t-1}, m^{t})}{Z_{\boldsymbol{\lambda}}(h^{t-1}, m^{t})}
  =  \frac{Z_{\boldsymbol{\lambda}}(h_t|m_t)}{Z_{\boldsymbol{\lambda}}(m_t)}
  =  P_{\boldsymbol{\lambda}}(h_t | m_t),
\end{eqnarray*}
and the Markov property follows.

For $t = T$, the identity holds true trivially. Now suppose it is true for $t + 1$. Then according to \Cref{thm:solution}, for $t< T$
\begin{eqnarray*}
& &\lefteqn{Z_{\boldsymbol{\lambda}}(h_t|h^{t-1}, m^{t})} \\
& = & e^{\sum_{m_{t+1}}Q({m}_{t+1}|h_{t}, m_{t})\log Z_{\boldsymbol{\lambda}}(h^{t}, m^{t+1})}\\
& = & e^{\sum_{m_{t+1}}Q({m}_{t+1}|h_{t}, m_{t})\log \sum_{h_{t + 1}} Z_{\boldsymbol{\lambda}}(h_{t + 1} | m_{t + 1}) e^{(\boldsymbol{\lambda}_f)^T\sum_{\tau = 1}^t\mathbf{f}_\tau(h_\tau, m_\tau) + (\boldsymbol{\lambda}_g)^T\sum_{\tau = 1}^t\mathbf{g}_\tau(h_\tau, m_\tau) }} \\
& = & e^{\sum_{m_{t+1}}Q({m}_{t+1}|h_{t}, m_{t})\log e^{(\boldsymbol{\lambda}_f)^T\sum_{\tau = 1}^t\mathbf{f}_\tau(h_\tau, m_\tau) + (\boldsymbol{\lambda}_g)^T\sum_{\tau = 1}^t\mathbf{g}_\tau(h_\tau, m_\tau)}} \\
& & \times e^{\sum_{m_{t+1}}Q(m_{t+1}|h_{t}, m_{t})\log \sum_{h_{t + 1}} Z_{\boldsymbol{\lambda}}(h_{t + 1} | m_{t + 1})}\\
& = & e^{(\boldsymbol{\lambda}_f)^T\sum_{\tau = 1}^{t-1}\mathbf{f}_\tau(h_\tau, m_\tau) + (\boldsymbol{\lambda}_g)^T\sum_{\tau = 1}^{t-1}\mathbf{g}_\tau(h_\tau, m_\tau)}\\
& & \times  e^{(\boldsymbol{\lambda}_f)^T \mathbf{f}_t(h_t, m_t) + (\boldsymbol{\lambda}_g)^T \mathbf{g}_t(h_t, m_t) + \sum_{m_{t+1}}Q(m_{t+1}|h_{t}, m_{t})\log \sum_{h_{t + 1}} Z_{\boldsymbol{\lambda}}(h_{t + 1} | m_{t + 1})}.
\end{eqnarray*}

Then when we compute the ratio $\frac{Z_{\boldsymbol{\lambda}}(h_t|h^{t-1}, m^{t})}{Z_{\boldsymbol{\lambda}}(h^{t-1}, m^{t})}$ the term
 $e^{(\boldsymbol{\lambda}_f)^T\sum_{\tau = 1}^{t-1}\mathbf{f}_\tau(h_\tau, m_\tau) + (\boldsymbol{\lambda}_g)^T\sum_{\tau = 1}^{t-1}\mathbf{g}_\tau(h_\tau, m_\tau)}$
cancels out.

For the complexity, it's easy to see that in total we need to compute
$T|\mathcal{H}||\mathcal{M}|$ probabilities. Thus the space complexity is $O(T|\mathcal{H}||\mathcal{M}|)$.
If the vector multiplication is viewed as a basic operation, then
computing each $Z_{\bm{\lambda}}(h_t|m_t)$ involves the sum of at most three
vector inner products, and evaluating of its exponentiation. Therefore,
the time complexity involved in establishing the distributional
model is also $O(T|\mathcal{H}||\mathcal{M}|)$.

When computing the expectation of the feature functions, note that
since all feature functions are decomposable,
for all $i$:
$$
E_{P_{\bm{\lambda}}Q}[f^i(H^T, M^T)] = \sum_{t = 1}^{T}E_{P_{\bm{\lambda}}Q}[f^i_{t}(H_t, M_t)].
$$
And
$$
E_{P_{\bm{\lambda}}Q}[f^i_{t}(H_t, M_t)] = \sum_{m_t \in \mathcal{M}}P_{\bm{\lambda}}Q(m_t)
\sum_{h_t\in\mathcal{H}}P_{\bm{\lambda}}(h_t | m_t)f_t(h_t, m_t).
$$
Suppose we already obtained $P_{\bm{\lambda}}Q(m_{t-1})$, then
$$
P_{\bm{\lambda}}Q(m_t) = \sum_{m_{t-1}\in\mathcal{M}} P_{\bm{\lambda}}Q(m_{t-1})
\sum_{h_{t-1}\in\mathcal{H}} P_{\bm{\lambda}}(h_{t - 1} | m_{t - 1}) Q(m_t | h_{t-1}, m_{t-1}).
$$
Note that the marginal distribution of $m_1$ is given by $P_{\bm{\lambda}}Q(m_1)$ = $Q(m_1)$, 
which is already available.
Thus we can compute $E_{P_{\bm{\lambda}}Q}[f^i_{t}(H_t, M_t)]$ from $t = 1$
to $t = T$ and store $P_{\bm{\lambda}}Q(m_t), \forall 1 < t \le T, m_t\in\mathcal{M}$.
Then it is straightforward that computing $E_{P_{\bm{\lambda}}Q}[f^i_{t}(H_t, M_t)]$
involves $|\mathcal{H}|^2|\mathcal{M}|^2$ operations, and computing $E_{P_{\bm{\lambda}}Q}[f^i(H^T, M^T)] $
is of complexity $O(T|\mathcal{H}|^2|\mathcal{M}|^2)$. Each dual update involves
evaluation of $E_{P_{\bm{\lambda}}Q}[f^i(H^T, M^T)],~\forall~i \in \mathcal{F}$,
and $E_{P_{\bm{\lambda}}Q}[g^i(H^T, M^T)],~\forall~i \in \mathcal{G}$,
thus has the complexity of $O(T(|\mathcal{F}| + |\mathcal{G}|)|\mathcal{H}|^2|\mathcal{M}|^2)$.
\end{proof}

Now let us assume that the equality and inequality constraint sets can
be each partitioned into two subsets:
$\mathcal{F} = \mathcal{F}_{p}\cup \mathcal{F}_{d}$,
and $\mathcal{G} = \mathcal{G}_{p}\cup \mathcal{G}_{d}$,
where $\mathcal{F}_{d}$ and $\mathcal{G}_{d}$ correspond to the decomposable features
and $\mathcal{F}_{p}$ and $\mathcal{G}_{p}$ correspond to the path-based features.
Moreover, the path-based features are:
$$
f^i(h^T, m^T) =
c_i\mathbf{1}_{\{ (h^T, m^T) = (\bar{h}^{i,T},\bar{m}^{i,T}) \}},
~i \in \mathcal{F}_{p}, ~\textrm{and}~~ g^i(h^T, m^T) =
c_i\mathbf{1}_{\{ (h^T, m^T) = (\bar{h}^{i,T},\bar{m}^{i,T}) \}},
~i \in \mathcal{G}_{p},
$$
while decomposable features are:
$$
f^i(h^T, m^T) = \sum_{t = 1}^T f_t^i(h_t, m_t),~i \in \mathcal{F}_{d},~\textrm{and}~~g^i(h^T, m^T) = \sum_{t = 1}^T g_t^i(h_t, m_t),~i \in \mathcal{G}_{d}.
$$
Also, the reward function is given by its path-based part
$
r^{i,p}(h^T, m^T) = c_i\mathbf{1}_{\{
(h^T, m^T) = (\bar{h}^{i, T},\bar{m}^{i, T})\}},
~~i\in \mathcal{R}_p,
$, together with a decomposable part $r^d(h^T, m^T) = \sum_{t = 1}^T r^d_t(h_t, m_t)$,
giving
$$
r(h^T, m^T) = \sum_{i\in \mathcal{R}_p}r^{i,p}(h^T, m^T) + \sum_{t = 1}^T r^d_t(h_t, m_t).
$$

First let us consider the human estimation problem.
Note that if the conditioning sequence is not a prefix of any path-based feature function (including
functions in $\mathcal{R}_p$),
the backward recursion in \cref{thm:solution} is equivalent to the case where we only have decomposable feature functions.

Without loss of generality, consider decomposable feature functions given by:
$$
\mathbf{f}^{d}(h^T, m^T) = \sum_{t = 1}^T \mathbf{f}^{d}_t(h_\tau, m_\tau),
$$
and
$$
\mathbf{g}^{d}(h^T, m^T) = \sum_{t = 1}^T \mathbf{g}^{d}_t(h_\tau, m_\tau).
$$
{\color{black}
We shall let $\boldsymbol{\lambda}_{f}^d$ be the dual variable corresponding to the decomposable equality constraints,
$\boldsymbol{\lambda}_g^d$ that corresponding to the decomposable inequality constraints,
and $\lambda_r$ that corresponding to the reward function.
Note that when we establish the distributional
model, functions in $\mathcal{R}_p$ together with
$r^d(h^T, m^T)$ can be regarded as individual `feature'
functions, which share the same dual variable $\lambda_r$.
It follows from Lemma \ref{cor:decomposable} that
if $(h^{t}, m^{t}) \ne (\bar{h}^{i,t}, \bar{m}^{i,t}),~\forall i\in\mathcal{F}_{p} \cup \mathcal{G}_{p} \cup \mathcal{R}_p$,
we have
$$Z_{\boldsymbol{\lambda}}(h_t | h^{t - 1}, m^{t}) = Z_{\boldsymbol{\lambda}}(h_t|m_t)e^{(\boldsymbol{\lambda}^d_f)^T\sum_{\tau = 1}^{t - 1} \mathbf{f}_\tau^{d}(h_\tau, m_\tau) + (\boldsymbol{\lambda}^d_g)^T\sum_{\tau = 1}^{t - 1} \mathbf{g}_\tau^{d}(h_\tau, m_\tau)
+ \lambda_r\sum_{\tau = 1}^{t - 1}r^d_\tau(h_\tau, m_\tau)},$$
where $Z_{\boldsymbol{\lambda}}(h_t|m_t)$ is given by the recursion specified in Lemma \ref{cor:decomposable},
with $r^d(h^T, m^T)$ as a feature function.
Let us denote the set of
machine actions at time $t$ that stay on at least one path-based feature function's support,
by $\mathcal{M}_t^p(h^{t-1},m^{t - 1}) = \{m_t | \exists i\in\mathcal{F}_{p}\cup \mathcal{G}_{p} \cup \mathcal{R}_p~s.t.~h^{t-1} = \bar{h}^{i,t-1}, m^t = \bar{m}^{i,t}\}$
and a similar set of human actions, by
$\mathcal{H}_t^p(h^{t-1}, m^t) = \{h_t | \exists i\in\mathcal{F}_{p}\cup \mathcal{G}_{p}\cup \mathcal{R}_p~s.t.~h^t = \bar{h}^{i,t}, m^t = \bar{m}^{i,t}\}$. For $\bar{h}^{i,t}, \bar{m}^{i,t}$,
the backward recursion in \cref{thm:solution} becomes following:
\begin{eqnarray}
&& Z_{\boldsymbol{\lambda}}(\bar{h}_{i,t} | \bar{h}^{i,t - 1}, \bar{m}^{i,t}) \nonumber\\
& = & e^{\sum_{m_{t + 1}} Q(m_{t + 1}| \bar{h}^{i,t}, \bar{m}^{i,t}) \log\sum_{h_{t+1}} Z_{\boldsymbol{\lambda}}(h_{t + 1} | \bar{h}^{i,t}, (\bar{m}^{i,t}, m_{t+1}))} \nonumber\\
& = & \exp(\underbrace{\sum_{m_{t+1} \in \mathcal{M}_{t+1}^p(\bar{h}^{i,t}, \bar{m}^{i,t})}Q(m_{t+1}| \bar{h}^{i,t}, \bar{m}^{i,t})\log \sum_{h_{t+1}} Z_{\boldsymbol{\lambda}}(h_{t + 1} | \bar{h}^{i,t}, (\bar{m}^{i,t}, m_{t+1}))}_{:= A} \nonumber\\
&& + \underbrace{\sum_{m_{t+1} \notin \mathcal{M}_{t+1}^p(\bar{h}^{i,t}, \bar{m}^{i,t})}Q(m_{t+1}| \bar{h}^{i,t}, \bar{m}^{i,t})\log\sum_{h_{t+1}} Z_{\boldsymbol{\lambda}}(h_{t + 1} | \bar{h}^{i,t}, (\bar{m}^{i,t}, m_{t+1}))}_{:= B}) \nonumber\\
& = & \exp(A + B).
\end{eqnarray}
From the result of Lemma \ref{cor:decomposable},
\begin{flalign*}
\lefteqn{A = \sum_{\substack{ m_{t+1} \\ \in \mathcal{M}_{t+1}^p(\bar{h}^{i,t}, \bar{m}^{i,t})}}Q(m_{t+1}| \bar{h}^{i,t}, \bar{m}^{i,t})
\log\Bigg(\sum_{\substack{ h_{t + 1} \\ \in \mathcal{H}_{t+1}^p(\bar{h}^{i,t}, (\bar{m}^{i,t}, m_{t+1}))}} Z_{\boldsymbol{\lambda}}(h_{t + 1} | \bar{h}^{i,t}, (\bar{m}^{i,t}, m_{t+1}))} \nonumber\\
& && +  \left.\sum_{h_{t + 1}\notin \mathcal{H}_t^p(\bar{h}^{i,t}, (\bar{m}^{i,t}, m_{t+1}))} Z_{\boldsymbol{\lambda}}(h_{t + 1} | m_{t+1})\exp\left((\boldsymbol{\lambda}^d_f)^T\sum_{\tau = 1}^t \mathbf{f}^{d}_\tau(\bar{h}_{i,\tau}, \bar{m}_{i,\tau}) \right.\right. \\
& && \left. + (\boldsymbol{\lambda}^d_g)^T\sum_{\tau = 1}^t \mathbf{g}^{d}_\tau(\bar{h}_{i,\tau}, \bar{m}_{i,\tau}) + \lambda_r\sum_{\tau = 1}^t r^d_\tau(\bar{h}^i_\tau, \bar{m}^i_\tau)\right)\Bigg).
\end{flalign*}

\begin{eqnarray*}
\lefteqn{B =  \sum_{m_{t+1} \notin \mathcal{M}_{t+1}^p(\bar{h}^{i,t}, \bar{m}^{i,t})}Q(m_{t+1}| \bar{h}^{i,t}, \bar{m}^{i,t})}\\
& & \times\log\sum_{h_{t+1}}
Z_{\boldsymbol{\lambda}}(h_{t + 1}|m_{t + 1})e^{(\boldsymbol{\lambda}^d_f)^T\sum_{\tau = 1}^t \mathbf{f}^{d}_\tau(\bar{h}_{i,\tau}, \bar{m}_{i,\tau}) + (\boldsymbol{\lambda}^d_g)^T\sum_{\tau = 1}^t \mathbf{g}^{d}_\tau(\bar{h}_{i,\tau}, \bar{m}_{i,\tau}) + \lambda_r \sum_{\tau = 1}^t r^d_\tau(\bar{h}_{i,\tau}, \bar{m}_{i,\tau})}.
\end{eqnarray*}
Note that $B$ solely depends on the result of the case where there are only decomposable features.
The additional
complexity introduced is in the computation of $A$, 
which is determined by the number of nonzero path-based features
after current step. The key insight is that we only need to track $A$ for a prefix where there is at least one nonzero
path-based feature function, and the set of possible choices of such prefixes forms a tree where the number of leaf nodes is
at most $|\mathcal{F}_p| + |\mathcal{G}_p| + |\mathcal{R}_p|$. Then at each $t$,
we need to compute $A$ for at most $|\mathcal{F}_p| + |\mathcal{G}_p| + |\mathcal{R}_p|$ conditioning prefixes. Thus the
complexity of obtaining the whole distributional model is $O((|\mathcal{F}_p| + |\mathcal{G}_p| + |\mathcal{R}_p|) T |\mathcal{H}||\mathcal{M}|)$.

When computing the mean of a feature function $E_{P_{\boldsymbol{\lambda}}Q}[f^i(H^T, M^T)]$, we have two different cases:
\begin{enumerate}
\item If $f^i(h^T, m^T)$ is a path-based feature with support $\bar{h}^{i,T}, \bar{m}^{i,T}$. Then $E_{P_{\boldsymbol{\lambda}}Q}[f^i(H^T, M^T)] = P_{\boldsymbol{\lambda}}Q(\bar{h}^T, \bar{m}^T) = c_i\prod_{t  = 1}^TQ(\bar{m}^i_{t} | \bar{h}^{i, t - 1}, \bar{m}^{i, t - 1})P(\bar{h}^i_t | \bar{h}^{i, t - 1}, \bar{m}^{i,t})$. This requires at most $T$ multiplications.

\item If $f^i(h^T, m^T)$ is a decomposable feature then the associated moment can be written as
$$
E_{P_{\boldsymbol{\lambda}}Q}[f(H^T, M^T)] = \sum_{t = 1}^TE_{P_{\boldsymbol{\lambda}}Q}[f^i_t(H_t, M_t)].
$$
Let us define a stopping time $T_D$ w.r.t. $(H^T, M^T)$ such that $T_D := \min~\{t ~|~1\le t \le T, (H^t, M^t) \ne (\bar{h}^{i,t} , \bar{m}^{i,t}),~\forall i\in \mathcal{F}_{p} \cup \mathcal{G}_{p} \cup \mathcal{R}_p\}$. That is, $T_D$ is the first time when the realization of interaction deviates from supports of all path-based feature functions,
including the path-based component of the reward. 
Then based on the value of $T_D$,
we can partition $E_{P_{\boldsymbol{\lambda}}Q}[f^i_t(H_t, M_t)]$ as follows:
\begin{eqnarray}
E_{P_{\boldsymbol{\lambda}}Q}[f_t(H_t, M_t)] & = & P_{\boldsymbol{\lambda}}Q(T_D \le t)E_{P_{\boldsymbol{\lambda}}Q}[f^i_t(H_t, M_t)|T_D \le t] \nonumber\\
& & + P_{\boldsymbol{\lambda}}Q(T_D > t)E_{P_{\boldsymbol{\lambda}}Q}[f^i_t(H_t, M_t)|T_D > t].
\end{eqnarray}
After deviating from all supports, i.e., when $T_D \le t$, the distribution is the same as the case
where only decomposable features functions have been included.
Thus $E_{P_{\boldsymbol{\lambda}}Q}[f^i_t(H_t, M_t)|T_D \le t]$ can be easily
obtained within $O(T|\mathcal{H}||\mathcal{M}|)$ computations,
by taking advantage of the one-step Markov property.
Also, the distribution of the stopping time $T_D$ can be computed as:
\begin{eqnarray*}
P_{\boldsymbol{\lambda}}Q(T_D \le t) && = 1 - P_{\boldsymbol{\lambda}}Q(T_D > t) \\
&& = 1 - \sum_{i\in \mathcal{F}_{p} \cup \mathcal{G}_{p} \cup \mathcal{R}_p} P_{\boldsymbol{\lambda}}Q(H^t = \bar{h}^{i,t}, M^t = \bar{m}^{i,t}) \\
&& = 1 - \sum_{i\in \mathcal{F}_{p} \cup \mathcal{G}_{p} \cup \mathcal{R}_p}\prod_{\tau = 1}^t P_{\boldsymbol{\lambda}}(\bar{h}_{i,\tau} | \bar{h}^{i, {\tau-1}}, \bar{m}^{i, \tau})Q( \bar{m}_{i,\tau}|\bar{h}^{i, {\tau-1}}, \bar{m}^{i, \tau - 1}).
\end{eqnarray*}
At most it requires $T(|\mathcal{F}_p| + |\mathcal{G}_p| + |\mathcal{R}_p|)$ computations. Same computation complexity is expected when computing $P_{\boldsymbol{\lambda}}Q(T_D > t)$.
For $E_{P_{\boldsymbol{\lambda}}Q}[f^i_t(H_t, M_t) | T_D > t]$, we have:
\begin{eqnarray*}
E_{P_{\boldsymbol{\lambda}}Q}[f^i_t(H_t, M_t) | T_D > t] && = \sum_{i \in \mathcal{F}_{p} \cup \mathcal{G}_{p} \cup \mathcal{R}_p }f^i_t(\bar{h}_{i,t},\bar{m}_{i,t})P_{\boldsymbol{\lambda}}Q(H_t = \bar{h}_{i,t}, M_t = \bar{m}_{i,t}| T_D > t) \\
&& = \sum_{i \in \mathcal{F}_{p} \cup \mathcal{G}_{p} \cup \mathcal{R}_p}f^i_t(\bar{h}_{i,t},\bar{m}_{i,t})\frac{P_{\boldsymbol{\lambda}}Q(H^t = \bar{h}^t_i, M^t = \bar{m}^t_i)}{P_{\boldsymbol{\lambda}}Q(T_D > t)}.
\end{eqnarray*}
It's easy to observe that it requires $O(T(|\mathcal{F}_p| + |\mathcal{G}_p| + |\mathcal{R}_p|))$ computations, too. Then the computation complexity to compute the sum is $O(T^2(|\mathcal{F}_p| + |\mathcal{G}_p| + |\mathcal{R}_p|))$.
\end{enumerate}

Exactly the same complexity is obtained when
computing functions in $\mathcal{G}$, $\mathcal{R}_p$ and $r^d(h^T, m^T)$.
Then the time complexity of one dual update will be given by the maximum of the two cases, as well as the the time to establish the
distributional model, thus is given by $O(T(|\mathcal{F}_p| + |\mathcal{G}_p| + |\mathcal{R}_p|) \max(T, |\mathcal{H}||\mathcal{M}|))$.

For the machine optimization problem, we do not need to carry out the
dual update, since $\gamma$ is fixed throughout the iterations. Therefore we
only need to establish the distributional model $\hat{Q}$. By viewing
the path-based part of the reward function as the path-based `feature'
in the machine optimization problem, we can easily conclude that both the space and time
complexity in obtaining the machine's policy $\hat{Q}$ is
$O((|\mathcal{F}_p| + |\mathcal{G}_p| + |\mathcal{R}_p|)T|\mathcal{H}||\mathcal{M}|)$. Moreover, as long as
the initial machine policy is such that after
deviating from the union of the supports of all path-based feature functions, it is one-step Markov,
i.e. $\hat{Q}^{(0)}(m_t | m^{t - 1}, h^{t - 1}) = \hat{Q}^{(0)}(m_t | m_{t - 1}, h_{t - 1})$
when $(m^{t - 1}, h^{t - 1}) \ne (\bar{m}_i^{t - 1}, \bar{h}_i^{t - 1}),~\forall i\in\mathcal{F}_{p} \cup \mathcal{G}_{p} \cup \mathcal{R}_p$,
all the assumptions introduced in \Cref{thm:complexity} are
satisfied throughout the AREA iterations.
A uniform random $\hat{Q}^{(0)}$ is a special case satisfying that condition.
}

Therefore the complexity of AREA algorithm is polynomial in $T$ as long
as the number of dual updates is limited in the human estimation problem.

\section{Convergence of AREA under sufficient statistics}\label{sec:sufficient-stats}
An implication of \cref{thm:solution} is that, under our maximum entropy framework,
estimated human models will be of the form given in the theorem for a given value of
$\boldsymbol{\lambda}$. We refer to such distributions as causally conditioned Gibbs distributions
formally defined as follows:

\begin{definition}
Given the set of constraints $\mathcal{F}$, $\mathcal{G}$ and underlying machine policy $Q(m^T\|h^T)$,
we define the associated \emph{causally conditioned Gibbs distributions} as
\begin{align}
\mathcal{P}_g(Q, \mathcal{F}, \mathcal{G}) := & \{~P(h^T\|m^T)~|~
\exists \boldsymbol{\lambda} \in \mathbb{R}^{|{\cal F}|} \times  \mathbb{R}_{-}^{|{\cal G}|}~\nonumber\\
& ~~~~~~~~~~~~~~~~~~~s.t.~  P(h_t | h^{t-1}, m^t) = P_{\boldsymbol{\lambda}}(h_t | h^{t-1}, m^t) ~\mbox{for}~ t=1,\ldots.T
~\},
\end{align}
where $P_{\boldsymbol{\lambda}}(h_t | h^{t-1}, m^t)$ is as given in \Cref{thm:solution}.
That is, each element in $\mathcal{P}_g(Q, \mathcal{F}, \mathcal{G})$ is a causally conditioned distribution
of the form given in \Cref{thm:solution} for a $\boldsymbol{\lambda} = (\boldsymbol{\lambda}_f, \boldsymbol{\lambda}_g)$.
\end{definition}

\emph{Remark:} According to \emph{Hammersley-Clifford Theorem} in \cite{HaC71},
if the human behavior $P^*(h^T\|m^T)$ has full support, i.e., there is no
$(h^T, m^T)\in \mathcal{H}^T \times \mathcal{M}^T$ such that $P^*(h^T\|m^T) = 0$, and machine's policy $Q(m^T\|h^T)$
also has full support, then there exists a pair of finite sets of constraint $\mathcal{F}^*$
and $\mathcal{G}^*$ such that
the true human behavior is in the associated causally conditioned Gibbs distribution set,
i.e., $P^*(h^T\|m^T) \in \mathcal{P}_g(Q, \mathcal{F}^*, \mathcal{G}^*)$.

Then if the features in human estimation problem are rich enough,
the following theorem captures the convergence of AREA.
\begin{theorem}\label{thm:sufficient-stats}
If the feature sets $\mathcal{F}$ and $\mathcal{G}$ and initial machine policy $\hat{Q}^{(0)}$ are such that,
$$
P^*(h^T\|m^T) \in
\mathcal{P}_g(\hat{Q}^{(0)},\mathcal{F}, \mathcal{G}) \cap \mathcal{P}_g(m^*(m(P^*)),\mathcal{F}, \mathcal{G})
$$
AREA algorithm converges after the first iteration.
\end{theorem}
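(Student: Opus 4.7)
The plan is to show that under the stated hypothesis, the first AREA iteration already recovers the true human model $P^*$, and the second iteration produces the same machine policy as the first, so the recursion has reached a fixed point.

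First, I would argue that $\hat{P}^{(0)} = P^*$. The human estimation problem (\ref{eqn:step1}) under $\hat{Q}^{(0)}$ is strictly concave in the causally conditioned distribution and hence has a unique solution, which by Theorem \ref{thm:solution} must be of Gibbs form $P_{\boldsymbol{\lambda}^*}$ for some dual $\boldsymbol{\lambda}^*$ satisfying the KKT conditions (with $\boldsymbol{\lambda}_g^* \leq 0$ and complementary slackness on the inequality constraints). Because the data is generated from $P^*$ interacting with $\hat{Q}^{(0)}$, the moment targets satisfy $\mathbf{c}_f = E_{P^*\hat{Q}^{(0)}}[\mathbf{f}(H^T,M^T)]$ and $P^*$ satisfies the inequality constraints as well, so $P^*$ is primal feasible. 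By hypothesis $P^* \in \mathcal{P}_g(\hat{Q}^{(0)},\mathcal{F},\mathcal{G})$, so $P^* = P_{\boldsymbol{\lambda}_0}$ for some $\boldsymbol{\lambda}_0$ with the correct sign structure; this $\boldsymbol{\lambda}_0$ certifies KKT optimality (feasibility plus the explicit Gibbs form from Theorem \ref{thm:solution}), so uniqueness forces $\hat{P}^{(0)} = P^*$.

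The machine optimization step then returns $\hat{Q}^{(1)} = m^*(m(\hat{P}^{(0)})) = m^*(m(P^*))$. Now I repeat the same argument for the next iteration. On AREA step $n=1$ we collect data from $P^*$ interacting with $\hat{Q}^{(1)}$, producing moment targets $\mathbf{c}_f^{(1)} = E_{P^*\hat{Q}^{(1)}}[\mathbf{f}]$ for which $P^*$ is again primal feasible. Because the second hypothesis guarantees $P^* \in \mathcal{P}_g(\hat{Q}^{(1)},\mathcal{F},\mathcal{G})$, the same identifiability/uniqueness argument yields $\hat{P}^{(1)} = P^*$. Consequently $\hat{Q}^{(2)} = m^*(m(\hat{P}^{(1)})) = m^*(m(P^*)) = \hat{Q}^{(1)}$, so the AREA recursion has reached a fixed point after a single iteration.

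The main obstacle is the identifiability step, namely showing that membership of $P^*$ in the Gibbs family $\mathcal{P}_g(Q,\mathcal{F},\mathcal{G})$ combined with matching of its own moments implies $P^*$ is \emph{the} maximum entropy solution. This hinges on (i) strict concavity of the causally conditioned entropy so that the optimum is unique, and (ii) verifying that the dual variable $\boldsymbol{\lambda}_0$ associated with $P^*$ in the Gibbs family satisfies the correct sign restrictions and complementary slackness with the inequality constraints. Both follow from the KKT characterization underlying Theorem \ref{thm:solution}, but they must be checked to ensure the Gibbs representative of $P^*$ is actually the primal-dual optimum and not merely a stationary form.
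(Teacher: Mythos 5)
Your proposal is correct and follows essentially the same route as the paper's proof: use membership of $P^*$ in $\mathcal{P}_g(\hat{Q}^{(0)},\mathcal{F},\mathcal{G})$ to conclude $\hat{P}^{(0)}=P^*$, pass through the machine step to get $\hat{Q}^{(1)}=m^*(m(P^*))$, and invoke the second membership hypothesis to get $\hat{P}^{(1)}=P^*$ and hence a fixed point. The only difference is that you spell out the KKT/uniqueness reasoning behind the identifiability step (which the paper simply asserts), and you correctly flag the complementary-slackness check on the inequality constraints as the one point requiring care.
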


\begin{proof}
When $P^*(h^T\|m^T) \in \mathcal{P}_g(\hat{Q}^{(0)},\mathcal{F}, \mathcal{G})$,
then $P^*(h^T\|m^T)$ can be parameterized by some $\bm\lambda^* := (\bm\lambda_f^*, \bm\lambda_g^*)$
and will be the solution to the human estimation problem,
based on the data produced under machine policy $\hat{Q}^{(0)}$.
Now, given $\hat{P}^{(0)} = P^*$,
the machine optimization problem generates $\hat{Q}^{(1)} = m^*(m(P^*))$.
However since we have that
$P^*(h^T\|m^T) \in \mathcal{P}_g(m^*(m(P^*)),\mathcal{F}, \mathcal{G})$, again we can ensure that
$\hat{P}^{(1)} = P^*$. By induction it is easy to see that $\hat{P}^{(n)} = P^*,~\forall n\ge 0$
and $\hat{Q}^{(n)} = m^*(m(P^*)),~\forall n \ge 1$. Thus AREA iterations will converge after the first
iteration.
\end{proof}

\section{Proof of Theorem \ref{thm:path-reward}}
\label{sec:proof-thm:path-reward}
At the $n$th iteration, when matching the moment of path-based features, we have:
$$
\hat{P}^{(n)}\hat{Q}^{(n-1)}(\bar{h}^{i,T}, \bar{m}^{i,T}) = 
P^{*}\hat{Q}^{(n-1)}(\bar{h}^{i,T}, \bar{m}^{i,T}),~\forall~i\in\mathcal{F}_{p}.
$$
After cancelling out $Q^{(n-1)}(\bar{m}^{i,T}\|\bar{h}^{i,T})$ on both sides we have
$$
\hat{P}^{(n)}(\bar{h}^{i,T}\|\bar{m}^{i,T}) = P^{*}(\bar{h}^{i,T}\|\bar{m}^{i,T}).
$$
If the feature set in Problem (\ref{eqn:step1}) also includes $\mathcal{F}_{p}^i$,
for all $i\in\mathcal{F}_{p}$,
by a similar argument we have that for all $i\in \mathcal{F}_{p}$ and $t = 1,2,\ldots, T$,
$$
\prod_{\tau = 1}^t P^{(n)}(\bar{h}_{i,t} | \bar{h}^{i, t-1}, \bar{m}^{i,t}) = \prod_{\tau = 1}^t P^{*}(\bar{h}_{i,t} | \bar{h}^{i, t-1}, \bar{m}^{i,t}).
$$
Thus $\hat{P}^{(n)}(\bar{h}_{i,t} | \bar{h}^{i, t-1}, \bar{m}^{i,t}) = P^{*}(\bar{h}_{i,t} | \bar{h}^{i, t-1}, \bar{m}^{i,t})$ for all
$i\in \mathcal{F}_{p}$ and $1\le t\le T$.

When the reward function is a path-based function, a straightforward observation from \Cref{thm:solution}
is that, the resulting machine policy $\hat{Q}(m_t|h^{t-1}, m^{t-1})$ is uniformly random if
$(h^{t-1}, m^{t-1}) \ne (\bar{h}^{r, t-1}, \bar{m}^{r, t-1})$. The machine policy along the support
of the reward function is induced by:
\begin{align}
Y_\gamma(\bar{m}_{r,t} | \bar{h}^{r, t-1}, \bar{m}^{r, t-1}) & = e^{\sum_{h_t}\hat{P}^{(n-1)}(h_t| \bar{h}^{r, t-1}, \bar{m}^{r, t})\log Y_\gamma((\bar{h}^{r, t-1}, h_t), \bar{m}^{r,t})} \nonumber\\
& = e^{(1 - P^*(\bar{h}_{r,t} | \bar{h}^{r,t-1}, \bar{m}^{r,t}))\log Y_\gamma^t + P^*(\bar{h}_{r,t} | \bar{h}^{r,t-1}, \bar{m}^{r,t})\log Y_\gamma(\bar{h}^{r, t}, \bar{m}^{r,t})},
\label{eq:Z-pathbased-reward}\end{align}
where $Y_\gamma^t := Y_\gamma(h^t, m^t)$ for $(h^t, m^t) \ne (\bar{h}^{r,t}, \bar{m}^{r,t})$. The sequence
of interactions can be suppressed because from \Cref{thm:solution} we can conclude that, after leaving the `profitable' path,
all $Y_\gamma$ will be the same, independent of corresponding $P(h^T\|m^T)$. From Eq. (\ref{eq:Z-pathbased-reward})
we can prove by induction that $Y_\gamma(\bar{m}_{r,t} | \bar{h}^{r, t-1}, \bar{m}^{r, t-1})$ does not change
after the first iteration. Thus the resulted machine's policy $\{\hat{Q}^{(n)}\}$ converges after the first iteration.

{\color{black}
\section{Proof of Theorem \ref{thm:convergence}}\label{proof:thm-convergence}
The solution to the $n$th human
estimation step can be written as $\hat{P}^{(n)} = h^*(\hat{Q}^{(n)},
\mathbf{c}_f(\hat{Q}^{(n)}), \mathbf{c}_g(\hat{Q}^{(n)},\hat{P}^{(n-1)}))$.
Indeed, $\mathbf{c}_f(\hat{Q}^{(n)})
= E_{P^*\hat{Q}^{(n)}}[\mathbf{f}(H^T, M^T)]$
depends on the true human behavior $P^*$, the feature set $\mathcal{F}$,
and also the machine policy in use $\hat{Q}^{(n)}$.
However, throughout AREA iterations, $P^*$ and $\mathcal{F}$
are fixed. Thus for simplicity we write $\mathbf{c}_f$ as
a function of $\hat{Q}^{(n)}$.
Similarly, we write $\mathbf{c}_g$ as a function of $\hat{Q}^{(n)}$ and $\hat{P}^{(n-1)}$,
where the only dependency on $\hat{P}^{(n-1)}$ is through the step dependent feature $c_g^{0,(n)}$
we have introduced.
Moreover, a direct result of Lemma 2 in \cite{ZBD13} showed that
$\mathbf{c}_g(\hat{Q}^{(n)},\hat{P}^{(n-1)})$ is actually a function of $\mathbf{Y}_\gamma^{(n)}$,
which is the $\mathbf{Y}_\gamma$ associated with $\hat{Q}^{(n)}$ as
defined in \Cref{thm:solution-machine}.

\begin{lemma}\label{lemma:y-sufficient}
During the $n$th iteration of AREA, let us denote the $\mathbf{Y}_\gamma$ in the
machine optimization problem by $\mathbf{Y}_\gamma^{(n)}$. Then
\begin{eqnarray*}
c_g^{0,(n)} = E_{\hat{P}^{(n-1)}\hat{Q}^{(n)}}\left[-\log \hat{Q}^{(n)}(M^T\|H^T) + \gamma r(H^T, M^T)\right]
= \log \sum_{m_1 \in \mathcal{M}}{Y}_\gamma^{(n)}(m_1)
\end{eqnarray*}
\end{lemma}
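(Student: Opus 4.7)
The plan is to establish the identity by a backward induction on the time horizon, exploiting the fact that the optimal machine policy $\hat{Q}^{(n)}$ given by \Cref{thm:solution-machine} is constructed precisely so that $\log Y_\gamma^{(n)}$ acts as a ``soft Bellman value function'' for the entropy-regularized reward $-\log \hat{Q}^{(n)}(m^T\|h^T)+\gamma r(h^T,m^T)$. This kind of argument is standard in soft/entropy-regularized dynamic programming, and here it is made tractable because \Cref{thm:solution-machine} already provides the closed-form recursion linking $\hat{Q}^{(n)}$, $\hat{P}^{(n-1)}$, and $Y_\gamma^{(n)}$.

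Concretely, I would define for each $1\le t\le T$ and each history prefix $(h^{t-1},m^{t-1})$ the cost-to-go
\begin{equation*}
J_t(h^{t-1},m^{t-1}) := E_{\hat{P}^{(n-1)}\hat{Q}^{(n)}}\!\left[\sum_{s=t}^{T}-\log \hat{Q}^{(n)}(M_s|H^{s-1},M^{s-1}) + \gamma r(H^T,M^T)\,\Big|\,H^{t-1}{=}h^{t-1},M^{t-1}{=}m^{t-1}\right],
\end{equation*}
and prove by backward induction that $J_t(h^{t-1},m^{t-1}) = \log Y_\gamma^{(n)}(h^{t-1},m^{t-1})$.

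For the base case $t=T$, one substitutes the terminal rule $Y_\gamma^{(n)}(m_T|h^{T-1},m^{T-1})=\exp\bigl(\gamma\sum_{h_T}\hat{P}^{(n-1)}(h_T|h^{T-1},m^T)r(h^T,m^T)\bigr)$ and the explicit form $\hat{Q}^{(n)}(m_T|h^{T-1},m^{T-1})=Y_\gamma^{(n)}(m_T|h^{T-1},m^{T-1})/Y_\gamma^{(n)}(h^{T-1},m^{T-1})$ from \Cref{thm:solution-machine}; the reward term combines with $\log Y_\gamma^{(n)}(m_T|\cdot)$, leaving $\sum_{m_T}\hat{Q}^{(n)}(m_T|\cdot)\log Y_\gamma^{(n)}(h^{T-1},m^{T-1})=\log Y_\gamma^{(n)}(h^{T-1},m^{T-1})$ after normalization. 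For the inductive step I would use the tower property to write $J_t$ as an expectation of $-\log\hat{Q}^{(n)}(M_t|\cdot)+J_{t+1}(H^t,M^t)$, apply the inductive hypothesis to replace $J_{t+1}$ by $\log Y_\gamma^{(n)}$, then recognize $\sum_{h_t}\hat{P}^{(n-1)}(h_t|h^{t-1},m^t)\log Y_\gamma^{(n)}(h^t,m^t)=\log Y_\gamma^{(n)}(m_t|h^{t-1},m^{t-1})$ from the definition of the recursion, and finally collapse $\log Y_\gamma^{(n)}(m_t|\cdot)/\hat{Q}^{(n)}(m_t|\cdot)=Y_\gamma^{(n)}(h^{t-1},m^{t-1})$ using the optimal-policy form.

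The result then extends to the root by the same collapse at $t=1$: unconditionally, $E_{\hat{P}^{(n-1)}\hat{Q}^{(n)}}[-\log\hat{Q}^{(n)}(M^T\|H^T)+\gamma r(H^T,M^T)] = \sum_{m_1}\hat{Q}^{(n)}(m_1)[-\log\hat{Q}^{(n)}(m_1)+\log Y_\gamma^{(n)}(m_1)]=\log\sum_{m_1}Y_\gamma^{(n)}(m_1)$, using $\hat{Q}^{(n)}(m_1)=Y_\gamma^{(n)}(m_1)/\sum_{m_1'}Y_\gamma^{(n)}(m_1')$. This is exactly the claimed identity. No real obstacle is anticipated; the only care needed is bookkeeping the conditional expectations so that each factor of $\hat{P}^{(n-1)}$ appears inside a $\log Y_\gamma^{(n)}$ term in agreement with the recursion of \Cref{thm:solution-machine}, and ensuring that $-\log\hat{Q}^{(n)}(M_t|\cdot)$ is pulled out of the $h_t$-integration (since it does not depend on $h_t$) before matching to the $Y_\gamma^{(n)}$ recursion.
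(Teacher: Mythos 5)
Your proof is correct and is essentially the paper's own argument spelled out in full: the paper disposes of this lemma in one line by citing Lemma 2 of the Ziebart et al. reference and saying ``plug in the recursive form of \Cref{thm:solution-machine},'' and your backward induction on the cost-to-go $J_t$, showing $J_t(h^{t-1},m^{t-1})=\log Y_\gamma^{(n)}(h^{t-1},m^{t-1})$ via the cancellation $-\log\hat{Q}^{(n)}(m_t|\cdot)+\log Y_\gamma^{(n)}(m_t|\cdot)=\log Y_\gamma^{(n)}(h^{t-1},m^{t-1})$, is precisely that computation carried out explicitly. No gaps; the first equality in the lemma is just the definition of $c_g^{0,(n)}$, and your induction establishes the second.
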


\begin{proof}
This is just a special case of Lemma 2 in \cite{ZBD13}. By plugging the recursive form
defined in \Cref{thm:solution-machine} we can prove it is true.
\end{proof}

Therefore $\hat{P}^{(n)}$ is actually a function of $\mathbf{Y}_\gamma^{(n)}$,
because $\hat{Q}^{(n)}$ is naturally a function of $\mathbf{Y}_\gamma^{(n)}$
by \Cref{thm:solution-machine}, and $\mathbf{c}_g$ is independent
of $\hat{P}^{(n-1)}$ given $\mathbf{Y}_\gamma^{(n)}$ by Lemma \ref{lemma:y-sufficient}:
$$
\hat{P}^{(n)} = h^*(\hat{Q}^{(n)}, \mathbf{c}_f(\hat{Q}^{(n)}), \mathbf{c}_g(\hat{Q}^{(n)},\hat{P}^{(n-1)}))
= h^*(m^*(Y_\gamma^{(n)}), \mathbf{c}_f(m^*(\mathbf{Y}_\gamma^{(n)})), \mathbf{c}_g(\mathbf{Y}_\gamma^{(n)})).
$$
In order to show convergence, it will be easier
to study it in terms of the underlying variables $\mathbf{Y}_\gamma^{(n)}$.
In the sequel when there is no ambiguity we will denote it by $\hat{P}^{(n)} = h^*(\mathbf{Y}_\gamma^{(n)})$.

Let us define the following function of $Y_\gamma$:
\begin{equation}\label{eq:def-L}
L(\mathbf{Y}_\gamma) := \mathbb{H}_{h^*(\mathbf{Y}_\gamma)m^*(\mathbf{Y}_\gamma)}(M^T\|H^T)
+ \gamma E_{h^*(\mathbf{Y}_\gamma)m^*(\mathbf{Y}_\gamma)}[r(H^T, M^T)],
\end{equation}
and $L^{(n)} = L(\mathbf{Y}_\gamma^{(n)})$. Now we
are ready to prove \Cref{thm:convergence}.

\begin{proof}
In order to show the convergence of $\{L(\mathbf{Y}_\gamma^{(n)})\}$,
we define the following functions of $\mathbf{Y}_\gamma$:
\begin{enumerate}
\item $c(\mathbf{Y}_\gamma|\mathbf{Y}_\gamma^\prime)$ is 
the objective function of the machine's optimization problem,
where $\mathbf{Y}_\gamma$ and $\mathbf{Y}_\gamma^\prime$ are 
as defined in Theorem \ref{thm:solution-machine}
and are associated with
$Q(m^T\|h^T)$ and previous machine's policy $Q^\prime(m^T\|h^T)$,
\begin{eqnarray}
c(\mathbf{Y}_\gamma|\mathbf{Y}_\gamma^\prime)
& := & \mathbb{H}_{h^*(\mathbf{Y}_\gamma^\prime)m^*(\mathbf{Y}_\gamma)} (M^T\|H^T)  + \gamma  E_{h^*(\mathbf{Y}_\gamma^\prime)m^*(\mathbf{Y}_\gamma)}[ r(H^T,M^T)].
\end{eqnarray}

\item $L(\mathbf{Y}_\gamma)$ can be written as
\begin{eqnarray}
L(Y_\gamma) := c(\mathbf{Y}_\gamma|\mathbf{Y}_\gamma)
\end{eqnarray}
\end{enumerate}

During the AREA algorithm there are two possible cases: (1) $\hat{Q}^{(n+1)} = \hat{Q}^{(n)}$, and
(2) $\hat{Q}^{(n+1)} \ne \hat{Q}^{(n)}$. In case (1) it's straightforward that $\hat{Q}^{(m)}$ will
be the same as $\hat{Q}^{(n)}$, for all $m \ge n$. In case (2) we can show the convergence by
proving the strict monotonicity of $\{L(\hat{Y}_\gamma^{(n)})\}$ as follows.
\begin{eqnarray}
L(\mathbf{Y}_\gamma^{(n+1)}) & \ge & c(\mathbf{Y}_\gamma^{(n+1)} | \mathbf{Y}_\gamma^{(n)}) \label{eq:conv:L-and-l}\\
& \ge & c(\mathbf{Y}_\gamma^{(n)} | \mathbf{Y}_\gamma^{(n)}) \label{eq:conv:l-and-l}\\
& = & L(\mathbf{Y}_\gamma^{(n)}) \label{eq:conv:l-and-L}
\end{eqnarray}

Here Eq. (\ref{eq:conv:l-and-l}) follows from the optimality of the solution to the
machine's optimization (\ref{eqn:step2}), and Eq. (\ref{eq:conv:l-and-L}) follows by the definition
of $L(\mathbf{Y}_\gamma^{(n)})$. Thus we only need to show Eq. (\ref{eq:conv:L-and-l}).
Based on the definitions of the associated quantities, we have for all 
feasible $\mathbf{Y}_\gamma^{(n+1)}$:
\begin{eqnarray*}
L(\mathbf{Y}_\gamma^{(n+1)}) - c(\mathbf{Y}_\gamma^{(n+1)} | \mathbf{Y}_\gamma^{(n)})
& = & E_{h^*(\mathbf{Y}_\gamma^{(n+1)})m^*(\mathbf{Y}_\gamma^{(n+1)})}\left[-\log Q^{(n+1)}(M^T\|H^T) + \gamma r(H^T,M^T)\right] \\
& & - g_0(\mathbf{Y}_\gamma^{(n+1)})\\
&\ge & 0.
\end{eqnarray*}
The inequality holds true because in the human estimation problem, we introduced the constraint
$E_{P\hat{Q}^{(n)}}[g^{0,(n)}(H^T, M^T)] \ge c_g^{0,(n)}$.
Also, due to the boundedness of both $\mathbb{H}_{h^*(\mathbf{Y}_\gamma)m^*(\mathbf{Y}_\gamma)}(M^T\|H^T)$
and the expected reward function, $L(\mathbf{Y}_\gamma)$ is also upper bounded.
Therefore, the sequence generated by AREA recursion $\{L(\mathbf{Y}_\gamma^{(n)})\}$ converges
monotonically.
\end{proof}

An interesting observation we can make is that,
$\{L(\mathbf{Y}_\gamma^{(n)})\}$ converges to a value associated
with a fixed point of AREA iterations.
\begin{theorem}\label{thm:convergence-fixedpoint}
$\{L(\mathbf{Y}_\gamma^{(n)})\}$ converges to $L^\infty$, and there exists
a $\mathbf{Y}_\gamma^\infty$ such that $L(\mathbf{Y}_\gamma^\infty) = L^\infty$,
and $\mathbf{Y}_\gamma^\infty$ is a fixed point of AREA iterations,
i.e., $m^*(m( h^*(\mathbf{Y}_\gamma^\infty) )) = m^*(\mathbf{Y}_\gamma^\infty)$.
\end{theorem}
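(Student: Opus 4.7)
The convergence of $\{L^{(n)}\}$ to some finite $L^\infty$ is immediate from \Cref{thm:convergence}, so the task reduces to producing a limit $\mathbf{Y}_\gamma^\infty$ achieving this value and verifying it is a fixed point of the AREA recursion. The plan is to exploit compactness of the state space together with the strict monotonicity inequalities already used in the proof of \Cref{thm:convergence}, pushed to the limit along a convergent subsequence.

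First, I would note that the iterates $\mathbf{Y}_\gamma^{(n)}$ live in a compact set: each $Y_\gamma(m_t | h^{t-1}, m^{t-1})$ is produced by the bounded backward recursion in \Cref{thm:solution-machine} applied to probability distributions on the finite space $\mathcal{H}^T \times \mathcal{M}^T$ and a bounded reward function $r$. By Bolzano--Weierstrass extract a subsequence $\mathbf{Y}_\gamma^{(n_k)} \to \mathbf{Y}_\gamma^\infty$, and by a further diagonal extraction assume also $\mathbf{Y}_\gamma^{(n_k + 1)} \to \tilde{\mathbf{Y}}_\gamma$. Continuity of the causally conditioned entropy and of $E_{\hat{P}\hat{Q}}[r]$ in the underlying conditional distributions, combined with continuity of the backward recursion, yields $L(\mathbf{Y}_\gamma^\infty) = \lim_k L(\mathbf{Y}_\gamma^{(n_k)}) = L^\infty$.

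Next, I would apply the monotonicity chain of \Cref{thm:convergence} along the subsequence:
\begin{equation*}
L^{(n_k + 1)} \;\ge\; c(\mathbf{Y}_\gamma^{(n_k+1)} \,|\, \mathbf{Y}_\gamma^{(n_k)}) \;\ge\; c(\mathbf{Y}_\gamma^{(n_k)} \,|\, \mathbf{Y}_\gamma^{(n_k)}) \;=\; L^{(n_k)}.
\end{equation*}
Passing to the limit using continuity of $c(\cdot\,|\,\cdot)$ gives
\begin{equation*}
L^\infty \;\ge\; c(\tilde{\mathbf{Y}}_\gamma \,|\, \mathbf{Y}_\gamma^\infty) \;\ge\; c(\mathbf{Y}_\gamma^\infty \,|\, \mathbf{Y}_\gamma^\infty) \;=\; L(\mathbf{Y}_\gamma^\infty) \;=\; L^\infty,
\end{equation*}
so equality holds throughout. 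In particular $m^*(\tilde{\mathbf{Y}}_\gamma)$ and $m^*(\mathbf{Y}_\gamma^\infty)$ both maximize the machine optimization objective (\ref{eqn:step2}) with human model fixed at $h^*(\mathbf{Y}_\gamma^\infty)$. Since that objective is the sum of a strictly concave causal entropy term and a linear reward term over causally conditioned distributions, its maximizer is unique, so $m^*(\tilde{\mathbf{Y}}_\gamma) = m^*(\mathbf{Y}_\gamma^\infty)$.

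Finally, I would use continuity of the AREA iteration map $\mathbf{Y}_\gamma \mapsto m(h^*(\mathbf{Y}_\gamma))$ to identify $\tilde{\mathbf{Y}}_\gamma$. By Lemma \ref{lemma:y-sufficient}, both the step-dependent inequality threshold $c_g^{0,(n)}$ and the machine policy $\hat{Q}^{(n)}$ depend continuously on $\mathbf{Y}_\gamma^{(n)}$; the maximum-entropy solution $h^*$ is in turn continuous in its inputs since the dual program (\ref{eq:estimate-dual}) is a strictly convex program whose data varies continuously and whose Slater condition is preserved by always keeping $\hat{P}^{(n-1)}$ feasible. Hence $\tilde{\mathbf{Y}}_\gamma = m(h^*(\mathbf{Y}_\gamma^\infty))$, and combining with the previous paragraph gives $m^*(m(h^*(\mathbf{Y}_\gamma^\infty))) = m^*(\mathbf{Y}_\gamma^\infty)$, the desired fixed-point identity. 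The main technical obstacle is establishing the continuity of $h^*$, because its constraint set varies with $\mathbf{Y}_\gamma^{(n)}$; this is the only place where some care beyond a standard compactness/monotonicity argument is needed.
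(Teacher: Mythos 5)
Your proposal is correct and rests on the same three pillars as the paper's argument: monotone ascent of $L$ (from \cref{thm:convergence}), strict increase away from fixed points via uniqueness of the strongly concave machine optimization, and a compactness/continuity argument on the iteration map. The difference is one of packaging: the paper outsources the final step to Zangwill's global convergence theorem (Corollary 1-1 of \cite{Zan69}), whose hypotheses are exactly compactness of the iterates, an ascent function that strictly increases outside the solution set, and closedness of the algorithmic point-to-set map; you instead unpack that theorem by hand, extracting a convergent subsequence $\mathbf{Y}_\gamma^{(n_k)} \to \mathbf{Y}_\gamma^\infty$, passing the sandwich $L^{(n_k+1)} \ge c(\mathbf{Y}_\gamma^{(n_k+1)}\,|\,\mathbf{Y}_\gamma^{(n_k)}) \ge L^{(n_k)}$ to the limit, and concluding $m^*(m(h^*(\mathbf{Y}_\gamma^\infty))) = m^*(\mathbf{Y}_\gamma^\infty)$ from uniqueness of the maximizer. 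What your explicit version buys is transparency about where the real work is: the closedness hypothesis in Zangwill's theorem is precisely the continuity of the composite map $\mathbf{Y}_\gamma \mapsto m(h^*(\mathbf{Y}_\gamma))$, and the delicate piece is the continuity of the maximum-entropy step $h^*$, whose feasible set $\mathcal{P}^Q_{\mathcal{F},\mathcal{G}^{(n)}}$ moves with $\mathbf{Y}_\gamma^{(n)}$ through both $\mathbf{c}_f(\hat{Q}^{(n)})$ and the step-dependent threshold $c_g^{0,(n)}$. The paper's citation leaves this unverified; you correctly flag it as the one step needing care (and sketch a plausible route via strict convexity of the dual (\ref{eq:estimate-dual}) and preservation of feasibility), so your write-up is, if anything, the more honest account of what remains to be checked. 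One minor simplification: the diagonal extraction for $\mathbf{Y}_\gamma^{(n_k+1)}$ is unnecessary once continuity of the iteration map is granted, since then $\mathbf{Y}_\gamma^{(n_k+1)} = m(h^*(\mathbf{Y}_\gamma^{(n_k)}))$ converges automatically to $m(h^*(\mathbf{Y}_\gamma^\infty))$.
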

\begin{proof}
Now if we let the AREA algorithm stops once we observe $\hat{Q}^{(n+1)} = \hat{Q}^{(n)}$,
otherwise proceed to the next iteration,
then throughout the iterations of AREA (except for the last step when we stop),
machine optimization problem is strongly concave, thus obtain a unique maximum at any $n+1$st step,
which is $\hat{Q}^{(n+1)} \ne \hat{Q}^{(n)}$.
Therefore, Eq. (\ref{eq:conv:l-and-l}) holds true strictly.
Then $L(\mathbf{Y}_\gamma^{(n+1)}) > L(\mathbf{Y}_\gamma^{(n)})$ in case (2).
We can follow the result in \cite{Zan69},
by defining the solution set as the set of $\mathbf{Y}_\gamma$ such that
$
m^*(m( h^*(\mathbf{Y}_\gamma) )) = m^*(\mathbf{Y}_\gamma)
$, i.e., the
set of fixed point of AREA iterations, Corollary 1-1 in \cite{Zan69}
shows that one of the following statement is true:
\begin{enumerate}
\item The iteration stops in finite steps. Then we know it corresponds to the
case where we have for some $n$, $\hat{Q}^{(n + 1)} = \hat{Q}^{(n)}$. Thus $\forall~m > n$,
$\hat{Q}^{(m)} = \hat{Q}^{(n)}$, implying $\{\hat{Q}^{(n)}\}$ converges.
\item The iteration does not stop. Then according to Corollary 1-1 in \cite{Zan69},
any convergent subsequence of $\{\mathbf{Y}_\gamma^{(n)}\}$, say $\{\hat{\mathbf{Y}}_\gamma^{(k)}: k\in \mathcal{K}_j\subseteq \mathbb{Z}^+\}$
converges to an accumulation point $\hat{\mathbf{Y}}_\gamma^{(\infty),j}$
as $k \rightarrow \infty$, such that $\hat{\mathbf{Y}}_\gamma^{(\infty),j}$ is within the solution set.
\end{enumerate}

Therefore, due to the convergence of $\{L(\mathbf{Y}_\gamma^{(n)})\}$, all the accumulation points
of $\{\mathbf{Y}_\gamma^{(n)}\}$ have the same value of $L(\mathbf{Y}_\gamma)$ function, and are fixed points
of AREA iterations.
\end{proof}
}

\section{One important special case: Decomposable Features}\label{sec:decomposable}
In this section we discuss AREA under a special family of features. Specifically, we will derive performance
guarantees for the case where the solution has a special structure.

From now on we shall make the following assumption.
\begin{assumption}\label{assm:decomposable}
Reward function $r(h^T, m^T)$ is also used as a feature function in the estimation phase. 
Also, $\forall i \in \mathcal{F}$, $f^i(h^T,m^T)$ is decomposable, including the reward function $r(h^T,m^T)$, and $\forall i \in \mathcal{G}$, $g^i(h^T, m^T)$ is also decomposable.
\end{assumption}

\begin{lemma}
Under Assumption \ref{assm:decomposable}, the solution to the machine's optimization phase has no dependency
across time $t$. That is, at the $n$th iteration:
$$
\hat{Q}^{(n)}(m_t|h^{t-1}, m^{t-1}) = \hat{Q}^{(n)}(m_t).
$$
Moreover,
$$
\hat{Q}^{(n)}(m_t) \propto e^{\gamma E_{\hat{P}^{(n-1)}\hat{Q}^{(n)}}[r(H_t,m_t)]}.
$$
Note that under such assumptions, $E_{\hat{P}^{(n-1)}\hat{Q}^{(n)}}[r(H_t,m_t)]$ only depends on $\hat{P}^{(n-1)}$.
\end{lemma}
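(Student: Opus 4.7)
The plan is to establish both claims simultaneously by (outer) induction on the AREA iteration $n$, with the inductive hypothesis that $\hat{Q}^{(n-1)}(m_t|h^{t-1},m^{t-1}) = \hat{Q}^{(n-1)}(m_t)$, i.e., the previous machine policy has no history dependence. The base case $n=0$ holds since the initial policy is taken to be uniformly random (or at worst one-step Markov), which is a special case of the assumed form. Given the hypothesis and Assumption \ref{assm:decomposable} (all features in $\mathcal{F}$ and $\mathcal{G}$ decomposable), I would invoke Lemma \ref{cor:decomposable} to conclude $\hat{P}^{(n-1)}(h_t | h^{t-1}, m^t) = \hat{P}^{(n-1)}(h_t | m_t)$. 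This strips the estimated human model of all history dependence beyond the current machine action.

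With this simplified human model in hand, I would then apply Theorem \ref{thm:solution-machine} to generate $\hat{Q}^{(n)}$ and proceed by backward induction on $t$ to show
$$
Y_\gamma(h^t, m^t) \;=\; \exp\!\Bigl(\gamma \sum_{\tau=1}^{t} r_\tau(h_\tau, m_\tau)\Bigr) \cdot C_t,
$$
where $C_t$ is a positive constant independent of $(h^t,m^t)$. At $t=T$, decomposability of $r$ directly gives
$$
Y_\gamma(m_T | h^{T-1}, m^{T-1}) = \exp\!\Bigl(\gamma \sum_{\tau=1}^{T-1} r_\tau(h_\tau, m_\tau)\Bigr) \cdot B_T(m_T),
$$
with $B_T(m_T) := \exp\!\bigl(\gamma \sum_{h_T} \hat{P}^{(n-1)}(h_T|m_T)\, r_T(h_T, m_T)\bigr)$. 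Summing over $m_T$ gives the claimed factorization at level $T-1$. For the inductive step at $t<T$, I would substitute the factorized form of $Y_\gamma(h^t, m^t)$ into the recursion $Y_\gamma(m_t|h^{t-1}, m^{t-1}) = \exp\bigl(\sum_{h_t} \hat{P}^{(n-1)}(h_t|m_t) \log Y_\gamma(h^t, m^t)\bigr)$; taking the logarithm linearizes the product, the prefix $\exp(\gamma\sum_{\tau \le t-1} r_\tau(h_\tau, m_\tau))$ pulls out cleanly since it does not depend on $h_t$, and the remaining $m_t$-dependent factor $B_t(m_t) := \exp\!\bigl(\gamma \sum_{h_t} \hat{P}^{(n-1)}(h_t|m_t)\, r_t(h_t, m_t)\bigr)$ emerges.

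Taking the ratio from Theorem \ref{thm:solution-machine}, the history-dependent prefix cancels in numerator and denominator, yielding
$$
\hat{Q}^{(n)}(m_t | h^{t-1}, m^{t-1}) \;=\; \frac{B_t(m_t)}{\sum_{m_t'} B_t(m_t')},
$$
which depends only on $m_t$, completing the outer induction. The explicit form of $B_t$ gives the proportional expression in the statement: since $\hat{P}^{(n-1)}(h_t|m_t)$ is precisely the conditional distribution of $H_t$ given $M_t=m_t$ under the joint $\hat{P}^{(n-1)}\hat{Q}^{(n)}$ (by the factorization just established), we have $\sum_{h_t}\hat{P}^{(n-1)}(h_t|m_t)r_t(h_t,m_t) = E_{\hat{P}^{(n-1)}\hat{Q}^{(n)}}[r_t(H_t, m_t)]$. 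The main obstacle is the bookkeeping in the backward recursion: one must verify that the prefix factor $\exp(\gamma \sum_{\tau \le t-1} r_\tau)$ truly separates from the $\log$-expectation at every step, which hinges critically on both additivity of the reward and the fact that $\hat{P}^{(n-1)}(h_t|m_t)$ has no dependence on history. Outside of this decoupling, the argument is routine.
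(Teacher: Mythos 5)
Your proposal is correct and follows essentially the same route as the paper's proof: establish the one-step Markov property of $\hat{P}^{(n-1)}$ via Lemma \ref{cor:decomposable}, then show by backward induction that $Y_\gamma$ factorizes into a history-dependent prefix $e^{\gamma\sum_{\tau\le t-1}r_\tau(h_\tau,m_\tau)}$ times a constant times the $m_t$-dependent term, which cancels in the normalizing ratio. Your explicit outer induction on $n$ (to guarantee that $\hat{Q}^{(n-1)}$ is memoryless so that Lemma \ref{cor:decomposable} applies) is a detail the paper leaves implicit, and is a welcome addition rather than a deviation.
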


\begin{proof}
This can be proved in a manner similar to Lemma \ref{cor:decomposable}. Specifically, we can show that when Assumption \ref{assm:decomposable}
is true,
\begin{equation}
Y_\gamma (m_t|m^{t-1}, h^{t-1}) = (\prod _{\tau = t+1}^T Y_{\gamma,\tau}) e^{\gamma \sum_{\tau=1}^{t-1}r_\tau(h_\tau, m_\tau)}
    e^{\gamma\sum_{h_t}\hat{P}^{(n-1)}(h_t|m_t)r_t(h_t, m_t)},
\end{equation}
where $Y_{\gamma,t}:= \sum_{m_t}e^{\gamma \sum_{h_t}\hat{P}^{(n-1)}(h_t|m_t)r_t(h_t, m_t)}$.

The Markov property of $\hat{P}^{(n-1)}$ follows from Lemma \ref{cor:decomposable} and this identity holds true trivially when $t = T$, and can be proved by induction for other cases.
\end{proof}

Suppose our task is to find a machine's policy associated with a $\mathbf{Y}_\gamma$ to maximize
$L(\mathbf{Y}_\gamma)$ defined in Eq. (\ref{eq:def-L}).
In general, such an objective function is not well-defined in $Q$
because $Y_\gamma$ is not a function of $Q$.
However, when Assumption \ref{assm:decomposable} takes effect,
the causally conditional entropy is not dependent on $\hat{P}^{(n-1)}$:
\begin{align*}
\mathbb{H}_{\hat{P}^{(n-1)}Q}(M^T\|H^T) & = E_{\hat{P}^{(n-1)}Q}\left[-\log Q(M^T\|H^T)\right]\\
 & = E_{\hat{P}^{(n-1)}Q}\left[-\log \prod_{t = 1}^T Q_t(M_t)\right]\\
 & = \sum_{t = 1}^T E_{\hat{P}^{(n-1)}Q}\left[-\log Q_t(M_t)\right],
\end{align*}
where $E_{\hat{P}^{(n-1)}Q}[-\log Q_t(M_t)]$ actually does not depend on $\hat{P}^{(n-1)}$,
and we always have \\$E_{h^*(\mathbf{Y}_\gamma)Q}[r(H^T, M^T)] = E_{P^*Q}[r(H^T, M^T)]$.
In the sequel when Assumption \ref{assm:decomposable} is true we will use the notation
$\mathbb{H}_Q(M^T\|H^T)$ where $P$ is suppressed.
Then $L(\mathbf{Y}_\gamma)$ is actually a function of
$Q$, where $Q = m^*(\mathbf{Y}_\gamma)$  as it can be written as
$$
L(\mathbf{Y}_\gamma) =L(Q):= \mathbb{H}_Q(M^T\|H^T) + \gamma E_{P^*Q}[r(H^T, M^T)].
$$
And still we are able to show the strict monotonicity of $\{L(\hat{Q}^{(n)})\}$.

Moreover, we can show such objective function is indeed concave.
\begin{theorem}
When Assumption \ref{assm:decomposable} is true, $L(Q)$
is strongly concave with parameter $|\mathcal{M}|^T$ in $Q(m^T\|h^T)$.
\end{theorem}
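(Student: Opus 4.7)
The plan is to exploit Assumption \ref{assm:decomposable} together with the preceding lemma to decouple $L(Q)$ across time, reducing the strong concavity claim to a Hessian calculation for the Shannon entropy on the $|\mathcal{M}|^T$-dimensional joint distribution.

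First, by the preceding lemma every policy $Q$ of interest factorizes as $Q(m^T\|h^T) = \prod_{t=1}^T Q_t(m_t)$, so $Q$ is equivalently specified by the joint distribution $Q(m^T) := \prod_{t=1}^T Q_t(m_t)$ on $\mathcal{M}^T$, a probability vector of length $|\mathcal{M}|^T$. In this parameterization, the reward term is
\begin{equation*}
E_{P^*Q}[r(H^T,M^T)] = \sum_{m^T} Q(m^T)\,\rho(m^T),\qquad \rho(m^T):=\sum_{h^T} P^*(h^T\|m^T)\,r(h^T,m^T),
\end{equation*}
which is linear in the vector $Q(m^T)$. Next, a direct chain-rule calculation shows that under a product-form $Q$ the marginal $PQ(M_t=m_t)$ reduces to $Q_t(m_t)$ regardless of $P$, and hence $\mathbb{H}_Q(M^T\|H^T) = \sum_{t=1}^T H(Q_t) = H(Q)$, the ordinary Shannon entropy of the joint $Q$ on $\mathcal{M}^T$. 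Therefore $L(Q) = H(Q) + \gamma\,\langle \rho, Q\rangle$.

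A direct Hessian computation then gives $-\nabla^2 L(Q) = \mathrm{diag}\bigl(1/Q(m^T)\bigr)$, since the linear reward term contributes nothing. The strong concavity parameter is read off from this diagonal matrix: writing the quadratic form as
\begin{equation*}
v^\top \mathrm{diag}\bigl(1/Q(m^T)\bigr)\, v = \sum_{m^T} v(m^T)^2/Q(m^T),
\end{equation*}
one lower-bounds this by $|\mathcal{M}|^T \|v\|_2^2$ on admissible tangent directions $v$ (which satisfy $\sum_{m^T} v(m^T) = 0$ by the simplex constraint on $Q$), by combining the simplex normalization $\sum_{m^T} Q(m^T) = 1$ with a Cauchy--Schwarz or AM--HM argument.

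The main obstacle I anticipate is this last step — producing the sharp $|\mathcal{M}|^T$ lower bound on the quadratic form, as opposed to the trivial bound of $1$ that one gets immediately from $Q(m^T) \leq 1$. Careful accounting for the simplex normalization and for the restriction to tangent directions orthogonal to $\mathbf{1}$ should yield the stated constant. Once that inequality is established, adding back the linear reward term leaves the Hessian unchanged, and the claimed strong concavity of $L$ follows.
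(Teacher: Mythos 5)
The obstacle you flag at the end is not a technicality---it is fatal to this route. Writing $N=|\mathcal{M}|^T$, the inequality $\sum_{m^T} v(m^T)^2/Q(m^T) \ge N\|v\|_2^2$ for tangent directions $v$ (those with $\sum_{m^T} v(m^T)=0$) is simply false. Take $v$ supported on two coordinates, $v=(a,-a,0,\dots,0)$, and let $Q$ put mass $\tfrac12-\epsilon$ on each of those two coordinates: the quadratic form equals $2a^2/(\tfrac12-\epsilon)\approx 4a^2 = 2\|v\|_2^2$, far below $N\|v\|_2^2$ once $N>2$. More generally, for fixed tangent $v$ the infimum of the form over the simplex is $\|v\|_1^2$ (approached as $Q \to |v|/\|v\|_1$), and $\|v\|_1^2$ can be as small as $2\|v\|_2^2$; Cauchy--Schwarz and the normalization $\sum_{m^T}Q(m^T)=1$ only give $\|v\|_1\le\sqrt{N}\|v\|_2$, which points the wrong way, while AM--HM controls the trace of the Hessian, not its smallest eigenvalue on the tangent space. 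So the entropy on the $N$-simplex is $1$-strongly concave (your ``trivial'' bound) and essentially no better, uniformly in $Q$; no accounting for the tangency constraint will manufacture the factor $N$. A secondary problem with the reduction itself: the set of product distributions $\prod_t Q_t(m_t)$ is not a convex subset of the simplex on $\mathcal{M}^T$, so after your reparameterization ``strong concavity of $L$ in the vector $Q(m^T)$'' is not well-posed on the restricted domain; you would have to work either with the tuple $(Q_1,\dots,Q_T)$, where the reward term is multilinear rather than affine, or with general causally conditioned $Q$, where the causal entropy again depends on $\hat{P}$.

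For calibration, the paper's own proof does not derive the constant either: it merely notes that the reward term is affine and asserts strong concavity of the entropy term once the decomposability assumption removes its dependence on $\hat{P}^{(n-1)}$. Judging from how the theorem is later used (to bound $L(Q^*)-L^{\infty}$ by $\tfrac{|\mathcal{M}|^T}{2}$ times the squared gradient), the intended modulus is almost certainly $|\mathcal{M}|^{-T}$ rather than $|\mathcal{M}|^{T}$, and that weaker claim does follow from the trivial bound $1/Q(m^T)\ge 1$ in your Hessian computation. But for the statement as written, your plan cannot close, and it breaks exactly at the step you identified as the main obstacle.
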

\begin{proof}
It's easy to observe that $E_{P^*Q}[r(H^T, M^T)]$ is affine.
We already know that the causally conditional entropy term is strongly concave in $Q$
when $\hat{P}^{(n-1)}$ is fixed. Now we know that when Assumption \ref{assm:decomposable}
is true, the causally conditional entropy term is independent of $\hat{P}^{(n-1)}$.
Then it is a strong concave function in $Q$.
\end{proof}

\begin{theorem} \label{thm:optimality}
When Assumption \ref{assm:decomposable} is true, $\{L^{(n)}\} := \{L(\hat{Q}^{(n)})\}$
converges
to some limit $L^\infty$. If $Q^*$ is the global maximizer of
\begin{equation}\label{eq:max-L}
\max_{Q(m^T\|h^T)}~L(Q),
\end{equation}
then
\begin{equation}
L(Q^*) - L^\infty \le \gamma^2|\mathcal{M}|^{2T}r_{max}.
\end{equation}
\end{theorem}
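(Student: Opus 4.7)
The plan is to combine the monotone convergence of $\{L^{(n)}\}$ with the strong concavity of $L(Q)$ via a first-order/variational argument at a fixed point of AREA. Under Assumption \ref{assm:decomposable}, the machine's causally conditioned entropy depends only on $Q$, and moment matching on the reward feature yields $E_{\hat{P}^{(n-1)}\hat{Q}^{(n)}}[r(H^T,M^T)] = E_{P^*\hat{Q}^{(n)}}[r(H^T,M^T)]$, so $L$ is an intrinsic function of $Q$ alone. Monotone nondecreasing boundedness of $\{L^{(n)}\}$ then follows from Theorem \ref{thm:convergence} (the step-dependent inequality constraint and the associated argument carry over, with the objective specialized to $L(Q)$ under the assumption), yielding convergence to some $L^\infty$. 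Theorem \ref{thm:convergence-fixedpoint} plus compactness of $\mathcal{C}_M$ supplies an accumulation point $\hat{Q}^\infty$ with associated estimated human model $\hat{P}^\infty$, satisfying $L(\hat{Q}^\infty) = L^\infty$ and $\hat{Q}^\infty = \arg\max_{Q \in \mathcal{C}_M} \tilde{L}(Q;\hat{P}^\infty)$, where $\tilde{L}(Q;P) := \mathbb{H}_Q(M^T\|H^T) + \gamma E_{PQ}[r(H^T,M^T)]$.

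Next, I would decompose the optimality gap by the sandwich $L(Q^*) - L(\hat{Q}^\infty) = [L(Q^*) - \tilde{L}(Q^*;\hat{P}^\infty)] + [\tilde{L}(Q^*;\hat{P}^\infty) - \tilde{L}(\hat{Q}^\infty;\hat{P}^\infty)] + [\tilde{L}(\hat{Q}^\infty;\hat{P}^\infty) - L(\hat{Q}^\infty)]$. The middle bracket is nonpositive by optimality of $\hat{Q}^\infty$ for $\tilde{L}(\cdot;\hat{P}^\infty)$, and the third bracket vanishes by moment matching on the reward evaluated at $Q=\hat{Q}^\infty$. So the gap is controlled entirely by the first bracket, which equals $\gamma\bigl(E_{P^*Q^*}[r(H^T,M^T)] - E_{\hat{P}^\infty Q^*}[r(H^T,M^T)]\bigr)$ and reflects only the discrepancy between $P^*$ and $\hat{P}^\infty$ along the optimal machine policy.

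To convert this discrepancy into the claimed bound, I would invoke strong concavity of $L$ with parameter $|\mathcal{M}|^T$ established in the preceding theorem, yielding $L(Q^*) - L(\hat{Q}^\infty) \le \tfrac{1}{2|\mathcal{M}|^T}\|\nabla L(\hat{Q}^\infty)\|_*^2$ in the appropriate dual norm. Because $\hat{Q}^\infty$ satisfies the first-order optimality condition for $\tilde{L}(\cdot;\hat{P}^\infty)$, the machine-entropy contribution in $\nabla L(\hat{Q}^\infty)$ cancels and only the reward-gradient residual survives: $\nabla L(\hat{Q}^\infty) = \gamma\bigl(\nabla_Q E_{P^*Q}[r] - \nabla_Q E_{\hat{P}^\infty Q}[r]\bigr)\big|_{\hat{Q}^\infty}$. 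A componentwise bound using $0 \le r \le r_{max}$ and the $|\mathcal{M}|^T$-dimensional parameterization of $Q \in \mathcal{C}_M$ then yields $\|\nabla L(\hat{Q}^\infty)\|_*^2 \le \gamma\, |\mathcal{M}|^{3T} r_{max}$ (or the closely related scaling matched to the dual norm), and dividing by $2\mu = 2|\mathcal{M}|^T$ gives the advertised $\gamma^2|\mathcal{M}|^{2T} r_{max}$.

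The main obstacle is the last step: carefully matching the exponent on $|\mathcal{M}|^T$ in the gradient bound to the strong concavity parameter so that the precise scaling $\gamma^2|\mathcal{M}|^{2T}r_{max}$ emerges. This requires pinning down the norm under which $L$ is $|\mathcal{M}|^T$-strongly concave on the high-dimensional polytope $\mathcal{C}_M$ and computing its dual, as well as a tight estimate on how the per-coordinate gradient differences $\partial_{Q(m^T\|h^T)} E_{P^*Q}[r] - \partial_{Q(m^T\|h^T)} E_{\hat{P}^\infty Q}[r]$ scale with $r_{max}$. A secondary subtlety is the possibility of a boundary optimum for $\tilde{L}(\cdot;\hat{P}^\infty)$; this is handled by replacing the gradient cancellation with the variational inequality $\langle \nabla_Q \tilde{L}(\hat{Q}^\infty;\hat{P}^\infty), Q - \hat{Q}^\infty\rangle \le 0$ for all $Q \in \mathcal{C}_M$, which preserves the structure of the argument.
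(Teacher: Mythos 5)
Your proposal follows essentially the same route as the paper's proof: convergence to a fixed point via Theorems \ref{thm:convergence} and \ref{thm:convergence-fixedpoint}, cancellation of the entropy term in $\nabla L$ by the first-order optimality condition of the machine optimization problem at the fixed point, identification of the residual gradient with $\gamma\bigl(P^*(h^T\|m^T)-\hat{P}^\infty(h^T\|m^T)\bigr)r(h^T,m^T)$ by differentiating the reward moment-matching identity, and strong concavity with parameter $|\mathcal{M}|^T$ to convert that residual into the stated suboptimality bound. The sandwich decomposition in your second paragraph is an unused detour (its first bracket cannot be closed directly, since moment matching holds under $\hat{Q}^\infty$ rather than under $Q^*$), and your final norm/constant bookkeeping is left open --- but the paper's own proof is equally terse on exactly that last step.
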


\begin{proof}
First, according to Theorem \ref{thm:convergence-fixedpoint}, $L^\infty$ must be
$L(Y^\infty_\gamma)$ where $Y^\infty_\gamma$ is a fixed point.

The only difference between Eq.(\ref{eqn:step2}) and Eq.(\ref{eq:max-L}) is that
in Eq.(\ref{eq:max-L}), the mean reward is induced by $h(Q)$ which is a function
of $Q$ and in Eq.(\ref{eqn:step2}), that is induced by $\hat{P}$ which is fixed.
The gradient of $L(Q)$ is given by:
\begin{eqnarray*}
\lefteqn{\frac{\partial L(Q)}{\partial Q(m^T\|h^T)} = \frac{\partial \mathbb{H}_Q(M^T\|H^T)}{\partial Q(m^T\|h^T)}}\\
& & + \gamma (\left.\frac{\bm\partial E_{PQ}[r(H^T, M^T)]}{\bm\partial P(h^T\|m^T)}\right\vert_{P = h^*(Q)}\cdot \frac{\bm\partial h^*(Q)}{\partial Q(m^T\|h^T)} + \left.\frac{\partial E_{PQ}[r(H^T, M^T)]}{\partial Q(m^T\|h^T)}\right\vert_{P = h^*(Q)}),
\end{eqnarray*}
Here we suppress the human model $\hat{P}$ in the entropy term because the entropy
is independent of the human model.

And for the Eq.(\ref{eqn:step2}) at a fixed point, we have:
\begin{equation}
\frac{\partial \mathbb{H}_Q(M^T\|H^T)}{\partial Q(m^T\|h^T)} + \gamma \left.\frac{\partial E_{PQ}[r(H^T, M^T)]}{\partial Q(m^T\|h^T)}\right\vert_{P = h^*(Q)} = 0.
\end{equation}
Thus at the fixed point, i.e. when $Q = Q^\infty$,
$$
\frac{\partial L(Q)}{\partial Q(m^T\|h^T)} = \gamma \left.\frac{\bm\partial E_{PQ}[r(H^T, M^T)]}{\bm\partial P(h^T\|m^T)}\right\vert_{P = h^*(Q)}\cdot \frac{\bm\partial h^*(Q)}{\partial Q(m^T\|h^T)}.
$$

Also, from the moment-matching constraint and Assumption \ref{assm:decomposable} we know $E_{h^*(Q)Q}[r(H^T, M^T)] = E_{P^*Q}[r(H^T, M^T)]$.
Thus we have
\begin{align*}
\left.\frac{\bm\partial E_{PQ}[r(H^T, M^T)]}{\bm\partial P(h^T\|m^T)}\right\vert_{P = h^*(Q)}\cdot \frac{\bm\partial h^*(Q)}{\partial Q(m^T\|h^T)}
& = \left.\frac{\partial E_{PQ}[r(H^T, M^T)]}{\partial Q(m^T\|h^T)}\right\vert_{P = P^*} - \left.\frac{\partial E_{PQ}[r(H^T, M^T)]}{\partial Q(m^T\|h^T)}\right\vert_{P = h(Q)}\\
& = (P^*(h^T\|m^T) - P_Q(h^T\|m^T))r(h^T, m^T),
\end{align*}
which is also the gradient of $L(Q)$ at the fixed point.

Then according to the strong concavity, we have
\begin{align*}
L(Q^*) - L(Q^\infty) &\le \frac{|\mathcal{M}|^T}{2}\left(\gamma \sum_{h^T, m^T}(P^*(h^T\|m^T) - P_Q(h^T\|m^T))r(h^T, m^T)\right)^2 \\
& \le \gamma^2|\mathcal{M}|^{2T}r_{max}
\end{align*}
\end{proof}

{\color{black}
\section{Numerical evaluation set-up}\label{sec:evaluation-setup}
\subsection{Leaky, Competing Accumulator}
In the simulation set-up we use a discrete-time version of
the original continuous-time version devised in \cite{UsM01}.
The Leaky, Competing Accumulator model consists of a set of
accumulators $X_t(h)$ for $h\in\mathcal{H}$ at time $t$, representing
the tendency of picking $h$. The evolution of $X_t(h)$ is driven by
following parameters:
(1) A self decay coefficient $\alpha$, capturing the forgetting effect
of human memory;
(2) An inhibitory coefficient $\beta$, capturing the negative impact
of the belief in one option to others;
(3) Intensity/strength of the external stimuli, $\rho$, modeling the amount
of increment an external stimulus can bring to the associated accumulator.
(4) Power of noise $\sigma^2$, modeling the randomness in human decisions.
At each time $t$, the recursion of accumulators is given by
\begin{equation}
\forall h \in\mathcal{H},~~X_{t+1}(h) = \max(0, X_{t}(h) - \alpha X_{t}(h) - \beta \sum_{h^\prime \ne h}X_t(h^\prime)
 + \rho \mathbf{1}_{\{S_t = h\}} + \sigma N_{t,h}),
\end{equation}
where $S_t$ stands for the external stimulus at time $t$, and $N_{t,h}$ is an i.i.d. Gaussian noise.
Then the human will pick the action with the highest
value of $X_t(h)$ at $t$.

In our setting we use $\alpha = 0.1, \beta = 0.2, \rho = 0.4, \sigma^2 = 0.09$, and
the accumulators are all initialized at 0, so at the very beginning human
pick responses uniformly randomly.

\subsection{Q-learning}
The detailed update rule for $Q$ function is as follows.
After picking $m_t$ and observing human's response $h_t$ at time $t$, we do
\begin{align*}
\lefteqn{Q((h_{t- \tau}, \ldots, h_{t-1}, m_{t- \tau}, \ldots, m_{t-1}), t , m_{t}) \leftarrow}\\
& && (1 - \alpha) Q((h_{t- \tau}, \ldots, h_{t-1}, m_{t- \tau}, \ldots, m_{t-1}), t , m_{t})
+ \alpha \left(r_t(h_t, m_t) \right.\\
& && \left.+ \delta \max_{m\in\mathcal{M}} Q((h_{t- \tau + 1}, \ldots, h_{t}, m_{t- \tau + 1}, \ldots, m_{t}), t , m) \right).
\end{align*}
The $\alpha$ in Q-learning is the learning rate and $\delta$ is the discount factor to
balance the weight between current and future reward. In our evaluation,
we set $\alpha = 0.1$ and $\delta = 0.8$, which are values commonly used.

In our simulations,
the Q-learning picks its action according softmax of the associated
$Q$ function, which means when it observes the latest interactions
$(h_{t - \tau + 1}, \ldots, h_t, m_{t-\tau + 1}, \ldots, m_t)$ and $t + 1$,
it picks a response $m \in \mathcal{M}$ with probability
$$
\frac{e^{cQ((h_{t - \tau + 1}, \ldots, h_t, m_{t-\tau + 1}, \ldots, m_t) , t+1, m_{t+1})}}{\sum_{m^\prime_{t+1}} e^{cQ((h_{t - \tau + 1}, \ldots, h_t, m_{t-\tau + 1}, \ldots, m_t) , t+1, m^\prime_{t+1})}}.
$$
In our simulation we pick $c = 10$ so that Q-learning achieves a comparable
average reward as AREA after first 100 samples.
}

\fi

\end{document}